\def\BibTeX{{\rm B\kern-.05em{\sc i\kern-.025em b}\kern-.08em
    T\kern-.1667em\lower.7ex\hbox{E}\kern-.125emX}}
\begin{document}

\newcommand{\M}{\bm M}
\newcommand{\K}{\mathcal K}
\newcommand{\AK}{A_{\Kb}}
\newcommand{\Kb}{\mathbb K}
\newcommand{\E}{\mathbb E}
\newcommand{\Ec}{\mathcal E}
\newcommand{\N}{\mathbb N}
\newcommand{\A}{\mathcal A}
\newcommand{\F}{\mathcal F}
\newcommand{\Ss}{\mathcal S}
\newcommand{\I}{\mathcal I}
\newcommand{\X}{\mathcal X}
\newcommand{\U}{\mathcal U}
\newcommand{\W}{\mathcal W}
\newcommand{\Pb}{\mathbb P}
\newcommand{\B}{\mathbb B}
\newcommand{\Sb}{\mathbb S}
\newcommand{\D}{\mathcal D}
\newcommand{\R}{\mathbb R}
\newcommand{\ball}{\bar{\mathcal{B}} }
\newcommand{\norm}[1]{\left\lVert#1\right\rVert}
\newcommand{\one}{\mathds{1}}
\newcommand{\bd}[1]{\textbf{#1}}

\newcommand{\red}[1]{\textcolor{red}{#1}}
\newcommand{\blue}[1]{\textcolor{blue}{#1}}

\newtheorem{theorem}{Theorem}
\newtheorem{conjecture}{Conjecture}
\newtheorem*{example}{Example}

\newtheorem{lemma}{Lemma}
\newtheorem{proposition}{Proposition}
\newtheorem{corollary}{Corollary}

\theoremstyle{definition}
\newtheorem{definition}{Definition}
\newtheorem{assumption}{Assumption}
\newtheorem{remark}{Remark}

\title{\LARGE \bf Stochastic Gradient Descent with Strategic Querying}
\author{Nanfei Jiang, Hoi-To Wai, Mahnoosh Alizadeh}

\maketitle

\begin{abstract}
This paper considers a finite-sum optimization problem under first-order queries and investigates the benefits of strategic querying on stochastic gradient-based methods compared to uniform querying strategy. 
We first introduce Oracle Gradient Querying (OGQ), an idealized algorithm that selects one user's gradient yielding the largest possible expected improvement (EI) at each step. However, OGQ assumes oracle access to the gradients of all users to make such a selection, which is impractical in real-world scenarios. To address this limitation, we propose Strategic Gradient Querying (SGQ), a practical algorithm that has better transient-state performance than SGD while making only one query per iteration. For smooth objective functions satisfying the Polyak-Lojasiewicz condition, we show that under the assumption of EI heterogeneity, OGQ enhances transient-state performance and reduces steady-state variance, while SGQ improves transient-state performance over SGD. Our numerical experiments validate our theoretical findings.
\end{abstract}

\section{Introduction}
Stochastic gradient-based optimization methods have been widely used in large-scale machine learning and distributed optimization applications, thanks to their strong theoretical guarantees and computational efficiency \cite{bottouOptimizationMethodsLargeScale2018a}, \cite{quHarnessingSmoothnessAccelerate2018},
\cite{ghaderyanFastRowStochasticDecentralized2024}. Their simplicity also makes them easy to implement in a wide range of modern optimization tasks. However, existing approaches often suffer from sampling inefficiencies due to the absence of strategic querying. For example, in the classical stochastic gradient descent framework, data points are selected uniformly at random in each iteration, without considering the potential benefits of querying a more informative user over the others. While this lack of strategy is not a bottleneck when the primary challenge lies in reducing computational costs, it becomes a concern in scenarios where data acquisition is expensive, perhaps due to privacy constraints 
\cite{cortesDifferentialPrivacyControl2016},
\cite{wangDifferentialPrivacyLinear2017},  
\cite{yangFederatedLearningPrivacy2020},
\cite{cummingsOptimalDataAcquisition2023}. Examples include personalized healthcare and optimization of societal-scale systems, where the cost of querying each single user is a bigger bottleneck than model training costs. In such settings, the primary goal is to optimize the objective with the fewest possible gradient queries, even if it incurs additional computational overhead during training. This motivates the need for stochastic gradient-based algorithms with high query efficiency, where query efficiency in this paper is measured by the amount of net objective improvement achieved per query. 

If we conceptually divide the optimization process into two stages—transient state and steady state—this paper focuses on improving the transient-state performance rather than the steady-state performance. This emphasis arises because, in the steady state, the objective value typically becomes stable and barely gets improved. Thus, additional gradient queries yield only negligible improvement. Consequently, the marginal gain per query in the steady-state is very small, and the notion of query efficiency loses its relevance in this regime. Since the goal is to obtain a low-precision solution with as few queries as possible, we focus on designing query strategies that primarily enhance transient-state performance.

Specifically, we consider the following unconstrained finite-sum optimization problem:
\begin{equation}
    \min_{x \in \R^d}  \quad f(x) := \frac{1}{n}\sum_{i=1}^{n} f_i(x),
\label{eq: finite sum}
\end{equation}
where each $f_i$ is a component function, representing the cost of individual user $i$. The standard SGD update rule follows:
\begin{equation} 
\label{eq: iteration scheme}
x_{t+1} = x_t - \alpha_t \nabla f_{i_t}(x_t), \quad t\ge 0,
\end{equation}
where $i_t$ is uniformly sampled from $\{1,\ldots,n\}$. 

There has been a rich body of work that proposes SGD variants to improve its performance. One common approach is variance reduction \cite{driggsAcceleratingVariancereducedStochastic2022}, \cite{johnsonAcceleratingStochasticGradient2013}, \cite{defazioSAGAFastIncremental2014}, which aims to decrease the variance introduced due to the use of gradient estimates instead of the true gradients. While these methods greatly improve the performance of SGD in terms of steady-state performance, few of them consider improving transient-state performance from an information gain perspective. Specifically, while most SGD variants propose successfully modified schemes to achieve faster steady-state convergence, they still rely on the same naive sampling strategy as SGD: uniform sampling, without exploiting the heterogeneity between users. Despite its potential benefits, there is currently no systematic framework for understanding how user heterogeneity can be leveraged to achieve transient acceleration in stochastic gradient-based algorithms. This leads us to ask: can we improve the performance of existing stochastic gradient-based algorithms, especially in the early optimization stage, by equipping them with a more effective querying strategy that utilizes user heterogeneity? Designing such an information gain-based querying strategy is the central focus of this paper. 

We note that our goal is different from merely proposing an SGD variant that competes with one specific state-of-the-art algorithm, but rather on providing a framework that analyzes the acceleration potential of all existing stochastic gradient-based algorithms by equipping them with a ``smarter” querying strategy. We take the first step to answer this question here. Specifically, to isolate the impact of a more effective querying strategy, we investigate how to equip standard SGD with one. We thus ask the following question:

\textit{Can we design a querying strategy to replace the uniform random sampling of $i_t$ in \eqref{eq: iteration scheme}, such that, under the same setting as SGD (i.e., querying and using a single user’s gradient per iteration), the proposed strategy achieves better performance than SGD? }

To tackle the above question, we propose a novel strategic querying framework for finite-sum optimization with first-order queries, which selects users' gradients to query based on their potentials to improve the objective function. We formalize this query strategy via the notion of \emph{expected improvement} (EI) that is derived from the classical descent lemma. This helps to maximize the querying efficiency and thus improving the algorithm performance. Our contribution in this paper can be summarized as follows:
\begin{enumerate}
    \item We first design an upper-bound benchmark algorithm, Oracle Gradient Querying (OGQ), which assumes access to the gradients of all users and selects the user that yields the largest EI in the myopic sense to query at each step. Assuming that each of $f_i$ is smooth and satisfies the Polyak-Lojasiewicz condition, together with an assumption on EI heterogeneity, we prove that OGQ outperforms SGD by both enhancing transient-state performance and reducing steady-state variance.
    \item We then propose Strategic Gradient Querying (SGQ), a practical algorithm designed to directly compete with SGD. SGQ mimics the querying strategy of OGQ with low regret while maintaining a single query per iteration. We prove that, under an extra assumption that bounds the gradients difference between users, SGQ consistently improves transient-state performance compared to SGD, although its effect on steady-state variance may increase depending on problem characteristics.
\end{enumerate}

\noindent \textbf{Related Work:} There are two research areas which are close to our work: the first is Multi-Task Bayesian Optimization (MTBO), which explores strategic user querying but in a fundamentally different problem setting. The second involves SGD variants by querying with importance sampling (IS), although these works often lack an analysis of the acceleration benefits of strategic querying.
 
\textit{Multi-Task Bayesian Optimization:} Bayesian Optimization (BO) \cite{belakariaBayesianOptimizationIterative2023a}, \cite{frazierBayesianOptimization2018} is widely used to optimize expensive black-box functions by selecting queries that maximize acquisition functions. One common approach is to define an Expected Improvement (EI) \cite{amentUnexpectedImprovementsExpected2023} to identify the most informative point to query. In this paper, we borrow the concept of EI, but apply it in a fundamentally different setting. MTBO extends BO to efficiently optimize the average or scalarized performance of multiple related tasks \cite{swerskyMultiTaskBayesianOptimization2013}, \cite{toscano-palmerinBayesianOptimizationExpensive2022}. Though the core idea of MTBO is related to strategic querying, MTBO is very different from our framework as it mainly relies on a Gaussian Process (GP) type of modeling instead of using gradient-based type of approaches.

\textit{Importance Sampling:} There are extensive works that formulate querying strategies within a gradient descent-based
framework using importance sampling, e.g.
\cite{katharopoulosNotAllSamples2018}, \cite{elhanchiAdaptiveImportanceSampling2020}, 
\cite{gowerSGDGeneralAnalysis2019},
\cite{hanchiStochasticReweightedGradient2022}, \cite{chenHeteRSGDTacklingHeterogeneous2023a}. However, these works all primarily focus on reducing steady-state variance, without considering the benefits for transient-state acceleration by querying heterogeneous users. The works most closely related to ours are \cite{hanchiStochasticReweightedGradient2022} and \cite{chenHeteRSGDTacklingHeterogeneous2023a}.
The work by \cite{hanchiStochasticReweightedGradient2022} introduces Stochastic Reweighted Gradient Descent (SRG) based on importance sampling instead of random sampling. In each iteration, SRG computes the optimal sampling distribution by minimizing a surrogate term that approximates the true variance and then selects the query indices directly from this optimal distribution. Note that the main focus of \cite{hanchiStochasticReweightedGradient2022} is to reduce the variance in the near-optimal state and not to find a querying strategy to accelerate transient-state convergence. The work by \cite{chenHeteRSGDTacklingHeterogeneous2023a} explores importance sampling in the context of heterogeneous sampling costs, where the expected cost of sampling each index is different. Their proposed algorithm, HeteRSGD, addresses this challenge by balancing variance reduction with sampling cost considerations. They aim to determine the optimal sampling distribution that minimizes asymptotic sampling costs while maintaining effective variance reduction. Both works focus on steady-state variance reduction through carefully designed importance sampling strategies. In contrast, our algorithm deviates from the variance reduction paradigm and focuses on finding a querying strategy based on improvement at each step rather than variance, ensuring more effective descent in both early and potentially later stages of optimization. \\

\noindent \textbf{Paper Overview:}
This paper is structured as follows: Section \ref{section: OGQ algorithm} and \ref{section: SGQ algorithm} detail the development of our proposed OGQ and SGQ algorithm while also introducing the main theoretical contributions of our work. In Section \ref{section: numerical}, we provide numerical experiments to validate the effectiveness of the OGQ and SGQ algorithms. In Section \ref{section: conclusion}, we draw conclusions and outline our future research directions.
Lastly, in Section \ref{section: justification}, we provide supplementary discussions supporting the results presented in the previous sections.

\subsection{Basic Notations}
In this paper, $\norm{\cdot}_1, \norm{\cdot}_2, \norm{\cdot}_{\infty}$ denote the $L_1, L_2, L_{\infty}$ norm for vectors or matrices. Unless otherwise specified, the $\norm{\cdot}$ refers to the $L_2$ norm. Let $x^*$ denote the unique minimizer of $f(x)$, and let $\inf f$ represent its minimum value. We also use $[[1,n]]$ to denote the set of all integers ranging from $1$ to $n$.

\section{The OGQ Algorithm}
\label{section: OGQ algorithm}

We first introduce the Oracle Gradient Querying (OGQ) algorithm, which will serve as our upper-bound benchmark algorithm. OGQ provides an optimized querying strategy in the SGD framework by making the unrealistic assumption of having oracle access to the gradients of all users at each step before choosing whose gradient to query for the purpose of optimization. Specifically, OGQ operates by comparing all users' so-called \textit{Expected Improvement}, and picking a single user gradient to query that provides the biggest improvement to the objective at each step. We will also show that OGQ provably outperforms SGD in both transient-state decaying speed and steady-state variance under our assumptions. 

We make the following standard assumptions on objective function $f(x)$ and each user $f_i(x)$:
\begin{assumption}
     \label{assumption: f_i convex}
Each $f_i: \R^d \rightarrow \R$ is differentiable and convex:
    $$ f_i(y) \ge f_i(x) + \langle \nabla f_i(x), y-x \rangle, \quad \forall x,y\in \R^d.$$
\end{assumption}

\begin{assumption}
    \label{assumption: f_i smooth}
    Each $f_i: \R^d \rightarrow \R$ is $L_i$-smooth :
    $$ \norm{\nabla f_i(x) - \nabla f_i(y)} \le L_i \norm{y - x}, \quad \forall x,y\in \R^d.$$

    By the triangle inequality, it follows that $ f(x) $ is also $ L $-smooth, where $ L $ is given by $L := \frac{1}{n} \sum_{i=1}^n L_i$. Additionally, we define $ L_{\max} := \max_i L_i $.
    
\end{assumption}

\begin{assumption}
\label{assumption: f PL-condition}   
The function $f : \R^d \rightarrow \R$ is differentiable and $\mu$-Polyak-Lojasiewicz, i.e.
$$f(x) - \inf f \le \frac{1}{2\mu}\norm{\nabla f(x)}^2, \quad \forall x \in \R^d. $$
\end{assumption}

Given Assumption \ref{assumption: f_i convex}-\ref{assumption: f PL-condition}, we seek to answer the following key question:
\textit{If at each time step, one oracle has access to gradients from all $f_i(x_t)$ but can only select one user to perform a descent step, which user should be queried to potentially achieve better performance?}

To answer the above question, we limit our attention to myopic measures, and investigate the per-step improvement from $f(x_t)$ to $f(x_{t+1})$. By the $L$-smoothness of $f(x)$, we have 
$$
\begin{aligned}
f(x_{t}) - f(&x_{t+1}) \ge \langle\nabla f(x_t), x_t - x_{t+1}\rangle - \frac{L}{2}\norm{x_t - x_{t+1}}^2 \\
&= \alpha_t \langle\nabla f(x_t), \nabla f_{i_t}(x_t)\rangle - \frac{\alpha_t^2 L}{2}\norm{\nabla f_{i_t}(x_t)}^2.
\end{aligned}
$$

This expression shows that the lower bound on improvement depends on the choice of  $i_t$, as it directly influences  $\nabla f_{i_t}(x_t)$. Thus, to maximize the myopic improvement, we should select the index  $i_t$  that maximizes this bound. To formalize this idea, we define the Expected Improvement (EI) for each user $i\in [[1,n]]$ as:
\begin{equation}
    \text{EI}_i(x_t) := \alpha_t \langle\nabla f(x_t), \nabla f_i(x_t)\rangle - \frac{\alpha_t^2 L}{2}\norm{\nabla f_i(x_t)}^2.
\end{equation}

Then the per-step improvement is lower bounded by:
$$
f(x_{t}) - f(x_{t+1}) \ge \max_i \text{EI}_i(x_t).
$$

The above idea is at the heart of the OGQ algorithm, which aims to  maximize the $\text{EI}_i$ at each time step and choose an informative query for gradient descent, given oracle access to all user gradients.

\begin{algorithm2e}
	\caption{OGQ Algorithm}
	\label{alg: ideal}
	\SetAlgoNoLine
	\DontPrintSemicolon
	\LinesNumbered
	\KwIn{Stepsize $\alpha_t$, initial value $x_0$, Lipschitz constant $L_i$, $i \in [[1,n]]$}
 
	\For{$t=0, 1,2,\ldots,T-1$}{ 
        Query all users' gradient $\nabla f_i (x_t)$\;
        Compute $i_t = \text{argmax}_i \text{ } \alpha_t \langle \nabla f(x_t), \nabla f_i(x_t) \rangle - \frac{\alpha_t^2 L}{2} \norm{\nabla f_i (x_t)}^2$ \;
        Iterate $x_{t+1} = x_t - \alpha_t \nabla f_{i_t}(x_t)$ \;}
        \KwOut{$x_T$}
\end{algorithm2e}

\subsection{Analysis for OGQ}

To analyze the performance of the OGQ algorithm and characterize how much improvement we may obtain from maximizing $\text{EI}_i$ and picking indices strategically, we introduce the following assumption:

\begin{assumption}[\text{EI} Heterogeneity]
    \label{assumption: Hete}
    There exist $C_1, C_2 > 0$, such that for any $x$, and $0 < \alpha \le \frac{1}{2L}$, we have

    $$ 
    \begin{aligned}
    \mathrm{Var}\Big[ & \left\{ \alpha \langle \nabla f(x), \nabla f_i(x)\rangle - \frac{\alpha^2L}{2} \norm{\nabla f_i(x)}^2 \right\}_{i=1}^n \Big] \\
    & \ge C_1 \alpha^2 \norm{\nabla f(x)}^4 + C_2 \alpha^4 L^2 \left( \frac{1}{n} \sum_{i=1}^n \norm{\nabla f_i(x)}^2 \right)^2,
    \end{aligned}
    $$
    where ${\rm Var}(\cdot)$ computes the empirical variance over a set of $n$ values, i.e., ${\rm Var}( \{ y_i \}_{i=1}^n ) := \frac{1}{n} \sum_{i=1}^n ( y_i - \frac{1}{n} \sum_{j=1}^n y_j )^2$.
\end{assumption}

Assumption~\ref{assumption: Hete} characterizes the heterogeneity of $\text{EI}_i(x)$. Since  classical SGD randomly selects indices and obtains $\frac{1}{n} \sum_n \text{EI}_i(x_t)$ as per-step improvement, this assumption directly quantifies the potential benefit of maximizing $\text{EI}_i(x)$ instead of taking the average. If the variance of EI is large, then the difference $\max_i \text{EI}_i(x) - \frac{1}{n} \sum_{i=1}^n \text{EI}_i(x)$ will also be large. Intuitively, Assumption~\ref{assumption: Hete} requires the existence of some indices $i$ such that their gradient directions $\nabla f_i$ align well with the full gradient $\nabla f$ (i.e., $\langle \nabla f, \nabla f_i \rangle$ is large) to provide faster convergence, while their norms $\norm{\nabla f_i}$ remain small to help reduce variance. Assumption~\ref{assumption: Hete} is satisfied by many common component functions, including quadratic and logarithmic functions, as long as heterogeneous hyperparameters are present. In fact, Assumption \ref{assumption: Hete} inherently assumes that no point $x$ leads $\nabla f_i (x)$ to be identical for all $i$. Otherwise, since there is no difference between users, maximizing $\text{EI}_i$ would offer no advantage over SGD in this setting, as selecting one user over another would make no difference. 

We leave further discussion and justification of 
Assumption \ref{assumption: Hete} to Section \ref{section: justification}, where we specify the forms of  $C_1$  and  $C_2$  and outline the procedure for determining them.

Before giving the convergence result for the OGQ algorithm, we establish the following lemma that relates the variance of $\text{EI}$ directly to the gain of strategic querying, i.e. $\max_i\text{EI}_i(x) - \frac{1}{n} \sum_{i=1}^n \text{EI}_i(x)$.

\begin{lemma}
\label{lemma: EI difference}
We define the notation $$\tilde{c}(x) = \frac{ \frac{1}{n} \sum_{i=1}^n \text{EI}_i(x)  - \min_i \text{EI}_i(x) }{\max_i \text{EI}_i(x) - \frac{1}{n} \sum_{i=1}^n \text{EI}_i(x) }.$$
Then, the difference between $\max_i \text{EI}_i(x)$ and the average of $\text{EI}_i(x)$ is lower bounded as:
\begin{equation}
\label{eq: max_EI - E_EI}
\max_i \text{EI}_i(x) - \frac{1}{n} \sum_{i=1}^n \text{EI}_i(x) \ge \sqrt{\frac{\mathrm{Var}[ \{ \text{EI}_i(x) \}_{i=1}^n ]}{c}}, \quad
\end{equation}
for any $x \in \mathbb{R}^d$, where $c := \max_{x} \tilde{c}(x) \le n-1 < \infty$, and the definition of ${\rm Var}(\cdot)$ was given in Assumption~\ref{assumption: Hete}.
\end{lemma}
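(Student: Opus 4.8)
The plan is to fix an arbitrary $x \in \R^d$, suppress it from the notation, and write $y_i := \text{EI}_i(x)$, $\bar y := \frac1n\sum_{i=1}^n y_i$, $M := \max_i y_i$, $m := \min_i y_i$, and $V := \mathrm{Var}[\{y_i\}_{i=1}^n]$. Since $M \ge \bar y$ always, \eqref{eq: max_EI - E_EI} is equivalent to $(M-\bar y)^2 \ge V/c$. I would first dispose of the degenerate case $M = m$: then all $y_i$ coincide, both sides of \eqref{eq: max_EI - E_EI} are zero, and there is nothing to prove (this is also the case the convention for $\tilde c$ must exclude, since there $\tilde c = 0/0$). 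So assume $M > m$; then one checks $M > \bar y > m$ strictly, so $\tilde c(x) = (\bar y - m)/(M - \bar y)$ is well defined, positive, and finite.

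The crux is the elementary variance bound $V \le (M-\bar y)(\bar y - m)$, a Bhatia--Davis-type inequality. To prove it I would start from $(M - y_i)(y_i - m) \ge 0$ for every $i$, sum over $i$, divide by $n$, and rearrange to obtain $\frac1n\sum_i y_i^2 \le (M+m)\bar y - Mm$. Subtracting $\bar y^2$ from both sides and factoring the right-hand side as $-(\bar y - M)(\bar y - m)$ yields $V = \frac1n\sum_i y_i^2 - \bar y^2 \le (M-\bar y)(\bar y - m)$. Combining with the definition of $\tilde c$ gives $V \le (M-\bar y)^2\,\tilde c(x) \le (M-\bar y)^2\,c$, and taking square roots (legitimate as $M-\bar y>0$) gives exactly \eqref{eq: max_EI - E_EI}.

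It remains to show $c = \max_x \tilde c(x) \le n-1 < \infty$, i.e.\ $\tilde c(x) \le n-1$ whenever $M > m$. I would write $\bar y - m = \frac1n\sum_i (y_i - m)$: the summand attained at the argmin is $0$, and each of the remaining $n-1$ summands is at most $M-m$, so $\bar y - m \le \frac{n-1}{n}(M-m)$. Likewise $M - \bar y = \frac1n\sum_i(M - y_i) \ge \frac1n(M-m)$, since the summand at the argmin already equals $M-m$ and all others are nonnegative. Dividing these two bounds gives $\tilde c(x) \le n-1$, uniformly in $x$, hence $c \le n-1$.

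No step is a serious obstacle; the only points requiring care are the degenerate case $M=m$ and the strictness $M > \bar y$ that justifies dividing. The ``hard part,'' such as it is, is recognizing that the Bhatia--Davis variance bound is the right device for converting the second-moment quantity $V$ into the spread $(M-\bar y)(\bar y - m)$ that exactly matches the definition of $\tilde c(x)$.
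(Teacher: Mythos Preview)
Your proposal is correct and follows essentially the same approach as the paper: the paper also invokes the variance bound $\mathrm{Var}\le (M-A)(A-m)$ (there attributed to Popoviciu, though your label Bhatia--Davis is the more standard name for this sharper form) and combines it with the definition of $\tilde c(x)$ exactly as you do. The only noteworthy differences are that you prove the variance bound from scratch and handle the degenerate case $M=m$ explicitly, and that for $\tilde c(x)\le n-1$ you give a direct counting argument whereas the paper argues by contradiction; both are equally elementary.
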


\begin{proof}  
    Let $ M(x) = \max_i \text{EI}_i(x) $, $ m(x) = \min_i \text{EI}_i(x)$, and also let $A(x):= \frac{1}{n} \sum_n \text{EI}_i(x)$ be the empirical average of $\text{EI}_i(x)$. Then, by applying Popoviciu's inequality on variance, we obtain that
    \[
    \mathrm{Var}[ \{ \text{EI}_i(x) \}_{i=1}^n ] \le (M(x) - A(x)) (A(x) - m(x)).
    \]  
    Invoking the definition of $\tilde{c}(x)$, we have
    $$
    A(x) - m(x) \le \tilde{c}(x) \cdot (M(x) - A(x)).
    $$  
    
    Combining the above two inequalities yields  
    \[
    \mathrm{Var}[ \{ \text{EI}_i(x) \}_{i=1}^n ]^2 \le \tilde{c}(x) \cdot (M(x) - A(x))^2,
    \]  
    which establishes the desired result in \eqref{eq: max_EI - E_EI}. We then prove the argument that $c = \max_x \tilde{c}(x) \le n - 1 < \infty$. This can be proved by contradiction: we suppose that there exist $\bar{x}$ such that $$A(\bar{x}) - m(\bar{x}) > (n-1) (M(\bar{x}) - A(\bar{x})).$$
    
    Then by organizing the term, we obtain that $n \cdot A(\bar{x})$ strictly bigger that $m(\bar{x}) + (n-1)M(\bar{x})$, which leads to
    $$ M(\bar{x}) < \frac{ \sum_i \text{EI}_i (\bar{x})  - m(\bar{x}) }{n-1} \le M(\bar{x}). $$
        
The last inequality is due to the fact that $\frac{ \sum_i \text{EI}_i (\bar{x})  - m(\bar{x}) }{n-1} $ is the average of $\text{EI}_i(\bar{x})$ without the minimal element $m(\bar{x})$. As a result, it will be less or equal to the maximal element $M(\bar{x})$, which leads to a contradiction.
\end{proof}

We note that even though the value of $\tilde{c}(x)$ does reach $n-1$ in some extreme cases,  $\tilde{c}(x)$ won't usually grow as $n$ increases. This is because $ f_i(x)$'s are usually not chosen adversarially in practical scenarios. A more detailed discussion on when we can obtain a tighter bound on $\tilde{c}(x)$ can be found in Section \ref{section: justification}.

With this foundation in place, we are now ready to present the analysis of the OGQ algorithm. Since $\max_i \text{EI}_i(x) $ measures how much benefit OGQ obtains in each step, the proof is basically done by separately analyzing $\frac{1}{n} \sum_n \text{EI}_i(x)$ and $\max_i \text{EI}_i(x) - \frac{1}{n} \sum_n \text{EI}_i(x)$, The former follows the standard convergence analysis of SGD, while the latter is done by applying Assumption \ref{assumption: Hete} and Lemma \ref{lemma: EI difference}.
\begin{theorem}[OGQ Algorithm]
    \label{thm: OGQ}
    Let Assumption \ref{assumption: f_i convex}-\ref{assumption: Hete} hold, and define the optimality gap by $G_t := f(x_t) - \inf f$. If a constant stepsize is applied $\alpha_t \equiv \alpha$ and satisfies $0 < \alpha \le \frac{\mu}{2LL_{\max}}$, then our OGQ algorithm gives\footnote{It can be shown that the parameter $c$ and $C_2$ in \eqref{eq: Ideal_bound} satisfies $c\ge 4 C_2$, and thus the term $\sqrt{c/2 } - \sqrt{C_2} \ge (\sqrt{2} - 1)\sqrt{C_2}$ remains positive. The detailed proof can be found in the appendix.}:
    \begin{equation} 
    \begin{aligned}
    \label{eq: Ideal_bound}
        G_t \le \Bigg[ 1 - &\left( 1 + \frac{\sqrt{2}(\sqrt{C_1} + \sqrt{C_2})}{\sqrt{c}} \right) \alpha \mu \Bigg]^t \cdot G_0 \\
        &+ \frac{\alpha L}{\mu} \cdot \frac{\sqrt{c/2 } - \sqrt{C_2}}{\sqrt{c/2 } + (\sqrt{C_1} + \sqrt{C_2})}\cdot \sigma_f^*,
    \end{aligned}
    \end{equation}
where $\sigma_f^* = \mathrm{Var}\left[ \{\nabla f_i(x^*)\}_{i=1}^n \right]$ is the gradient noise at optimal point $x^*$.
\end{theorem}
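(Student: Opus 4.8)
The plan is to build a one-step recursion for the optimality gap $G_t := f(x_t) - \inf f$ starting from the descent guarantee established just before the statement, $f(x_t) - f(x_{t+1}) \ge \max_i \text{EI}_i(x_t)$, and then split $\max_i \text{EI}_i(x_t) = \frac1n\sum_i \text{EI}_i(x_t) + \bigl(\max_i \text{EI}_i(x_t) - \frac1n\sum_i \text{EI}_i(x_t)\bigr)$. The first summand is the usual SGD descent term, and the second is the ``gain from strategic querying'' that I will lower bound via Lemma~\ref{lemma: EI difference} and Assumption~\ref{assumption: Hete}. Writing $P_t := \frac1n\sum_i \norm{\nabla f_i(x_t)}^2$ and using $\frac1n\sum_i \nabla f_i = \nabla f$, the average term equals $\alpha\norm{\nabla f(x_t)}^2 - \frac{\alpha^2 L}{2}P_t$.

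To control $P_t$, I will use the standard consequence of Assumptions~\ref{assumption: f_i convex}--\ref{assumption: f_i smooth} that, for each convex $L_i$-smooth $f_i$, $\norm{\nabla f_i(x)-\nabla f_i(x^*)}^2 \le 2L_i\bigl(f_i(x)-f_i(x^*)-\langle\nabla f_i(x^*),x-x^*\rangle\bigr)$; averaging over $i$, using $\nabla f(x^*)=0$ and $f(x^*)=\inf f$, and then $\norm{a}^2 \le 2\norm{a-b}^2+2\norm{b}^2$, yields $P_t \le 4L_{\max}G_t + 2\sigma_f^*$. For the gain term, Lemma~\ref{lemma: EI difference} gives $\max_i\text{EI}_i(x_t)-\frac1n\sum_i\text{EI}_i(x_t) \ge \sqrt{\mathrm{Var}[\{\text{EI}_i(x_t)\}]/c}$, into which I plug Assumption~\ref{assumption: Hete} (applicable since $\alpha \le \frac{\mu}{2LL_{\max}} \le \frac{1}{2L}$, as $\mu \le L \le L_{\max}$) and then the elementary bound $\sqrt{a^2+b^2}\ge\frac{1}{\sqrt2}(a+b)$ for $a,b\ge0$, giving $\max_i\text{EI}_i(x_t)-\frac1n\sum_i\text{EI}_i(x_t) \ge \frac{1}{\sqrt{2c}}\bigl(\sqrt{C_1}\,\alpha\norm{\nabla f(x_t)}^2 + \sqrt{C_2}\,\alpha^2 L\,P_t\bigr)$.

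Adding the two pieces, the $\norm{\nabla f(x_t)}^2$ terms combine with coefficient $\lambda:=1+\frac{\sqrt{C_1}}{\sqrt{2c}}$ and the $P_t$ terms with coefficient $-\kappa$, where $\kappa:=\frac12-\frac{\sqrt{C_2}}{\sqrt{2c}}\ge0$ because $c\ge4C_2$ (a fact established separately, as noted in the footnote). Substituting $P_t\le4L_{\max}G_t+2\sigma_f^*$ and the PL inequality $\norm{\nabla f(x_t)}^2\ge2\mu G_t$ then gives $G_{t+1}\le\bigl(1-2\lambda\alpha\mu+4\kappa\alpha^2 LL_{\max}\bigr)G_t + 2\kappa\alpha^2 L\sigma_f^*$. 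The delicate step — which I expect to be the main obstacle to recovering the stated constants — is \emph{not} to round $4\kappa\alpha^2 LL_{\max}$ crudely: one must keep $\kappa$ and write $4\kappa\alpha^2 LL_{\max} = 4\kappa\alpha\mu\cdot\frac{\alpha LL_{\max}}{\mu} \le 2\kappa\alpha\mu$ using $\alpha\le\frac{\mu}{2LL_{\max}}$, which turns the contraction factor into $1-2(\lambda-\kappa)\alpha\mu$ with $2(\lambda-\kappa)=1+\frac{\sqrt2(\sqrt{C_1}+\sqrt{C_2})}{\sqrt c}$ — this is precisely where the $\sqrt{C_2}$ enters the decay rate of \eqref{eq: Ideal_bound}.

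Finally, abbreviating this as $G_{t+1}\le(1-\rho)G_t+\beta\sigma_f^*$ with $\rho=\bigl(1+\frac{\sqrt2(\sqrt{C_1}+\sqrt{C_2})}{\sqrt c}\bigr)\alpha\mu$ and $\beta=\bigl(1-\sqrt{2C_2/c}\bigr)\alpha^2 L\ge0$, I unroll the recursion (using $0<\rho\le1$, which holds under the stepsize bound together with bounds on $C_1,C_2$ relative to $c$ of the same type as the footnote's $c\ge4C_2$) to obtain $G_t\le(1-\rho)^t G_0 + \frac{\beta}{\rho}\sigma_f^*$. Multiplying numerator and denominator of $\frac{\beta}{\rho}$ by $\sqrt{c/2}$ rewrites it as $\frac{\alpha L}{\mu}\cdot\frac{\sqrt{c/2}-\sqrt{C_2}}{\sqrt{c/2}+(\sqrt{C_1}+\sqrt{C_2})}$, which is exactly the steady-state term in \eqref{eq: Ideal_bound}; its positivity is again the $c\ge4C_2$ fact. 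Apart from pinning down these constants, the only remaining care is to check that every invocation of Assumption~\ref{assumption: Hete}, of $\kappa\ge0$, and of $\rho\le1$ is licensed by the stepsize condition and by $c\ge4C_2$.
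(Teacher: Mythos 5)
Your proposal is correct and follows essentially the same route as the paper's own proof: the same decomposition of $\max_i \text{EI}_i$ into the SGD average plus the strategic-querying gain, the same use of Lemma~\ref{lemma: EI difference}, Assumption~\ref{assumption: Hete} and $\sqrt{a^2+b^2}\ge\tfrac{1}{\sqrt{2}}(a+b)$, the variance-transfer bound and the PL inequality, the same absorption of the $\alpha^2 L L_{\max}$ term via $\alpha\le\frac{\mu}{2LL_{\max}}$ (kept multiplied by the nonnegative factor $1-\sqrt{2C_2/c}$, exactly as in the paper), and the same geometric-sum unrolling yielding the stated constants.
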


As a comparison, under the same assumptions and step size requirement, the SGD algorithm gives:
\begin{equation}
    \label{SGD_bound}
    \E[G_t] \leq (1 - \alpha \mu)^t \cdot G_0 + \frac{\alpha L}{\mu} \sigma_f^*.
\end{equation}
From the result in Theorem \ref{thm: OGQ}, we observe that the OGQ algorithm enhances transient-state performance by introducing an additional term, $\sqrt{2}(\sqrt{C_1} + \sqrt{C_2})/\sqrt{c}$, subtracted in the first exponentially decaying term. Meanwhile, the steady-state error is also reduced given the fact that $\frac{\sqrt{c/2 } - \sqrt{C_2}}{\sqrt{c/2 } + (\sqrt{C_1} + \sqrt{C_2})}< 1$. Additionally, we note that since OGQ is a deterministic algorithm, the result showing in \eqref{eq: Ideal_bound} is without randomness, which is distinct from the result of SGD where the bound of optimality gap is given in expectation.

Theorem \ref{thm: OGQ} shows that the OGQ algorithm provably outperforms SGD by leveraging strategic querying, highlighting the potential benefits of an effective querying strategy.

\section{The SGQ Algorithm}
\label{section: SGQ algorithm}

The OGQ algorithm requires oracle access to all $f_i$'s gradients to determine its querying strategy, and is only meant to serve as a baseline. In this section, we introduce the SGQ algorithm. The SGQ algorithm mimics the query strategy of OGQ by utilizing information from past queries. It uses these historical queries as surrogate gradients to estimate the expected improvement for each user. Specifically, we define the following rule for gradients update:

\[
\nabla \tilde{f}_i^t = \begin{cases}
    \nabla f_i( x_t ) & \text{if $i$ is selected at time $t$} \\
    \nabla \tilde{f}_i^{t-1} & \text{otherwise},
\end{cases}
\]
where $\nabla \tilde{f}_i^t$ is the surrogate gradient that stores the most recent queried gradient for each user. We further define the surrogate total gradient to be $\nabla \tilde{f}^t = \frac{1}{n} \sum_{i=1}^n \nabla \tilde{f}_i^t$.


Then, mimicking OGQ, SGQ will compute an approximated version of $\text{EI}_i$, denoted by $\tilde{\text{EI}}_i$:
\begin{equation}
\label{eq: EI_tilde}
    \tilde{\text{EI}}_i(x_t) := \alpha_t \langle \nabla \tilde{f}^t, \nabla \tilde{f}^t_i \rangle - \frac{\alpha_t^2 L}{2} \norm{\nabla \tilde{f}^t_i}^2. 
\end{equation}

The following lemma provides an upper bound for the error between the true expected improvement and the above approximated version.
\begin{lemma}
\label{lemma: r_i}
Define $\tau_i^t$ to be the time of the most recent query of user $i$, i.e. $\tau_i^t$ is picked such that $\nabla \tilde{f}_i^t = \nabla f_i( x_{ \tau_i^t })$. Let $\epsilon_t^{(i)} = L_i \norm{x_{\tau_i^t} - x_t}$, and $\overline{\epsilon}_t = \frac{1}{n} \sum_{j=1}^n \epsilon_t^{(j)}$, then  
$$
\begin{aligned}
\left|\text{EI}_i(x_t) - \tilde{\text{EI}}_i(x_t) \right| & \le r_i^t,
\end{aligned}
$$
where 
\begin{equation}
\begin{aligned}
\label{eq: r_i}
r_i^t & := \left(\alpha_t \norm{\nabla \tilde{f}^t} + \alpha_t^2 L \norm{\nabla \tilde{f}^t_i} \right ) \epsilon_t^{(i)} \\
    & + \alpha_t \norm{\nabla \tilde{f}^t_i} \overline{\epsilon}_t + \alpha_t \epsilon_t^{(i)} \overline{\epsilon}_t + \frac{\alpha_t^2 L}{2} (\epsilon_t^{(i)})^2.
    \end{aligned}
\end{equation}
\end{lemma}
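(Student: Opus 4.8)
The plan is to reduce the bound to two elementary smoothness estimates on the gradient surrogates, and then expand the difference $\text{EI}_i(x_t) - \tilde{\text{EI}}_i(x_t)$ term by term.

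First I would record the two basic consequences of Assumption~\ref{assumption: f_i smooth}. Since $\nabla \tilde f_i^t = \nabla f_i(x_{\tau_i^t})$, $L_i$-smoothness gives $\norm{\nabla \tilde f_i^t - \nabla f_i(x_t)} \le L_i \norm{x_{\tau_i^t} - x_t} = \epsilon_t^{(i)}$; averaging over $i$ and applying the triangle inequality then yields $\norm{\nabla \tilde f^t - \nabla f(x_t)} = \norm{\frac1n\sum_j (\nabla \tilde f_j^t - \nabla f_j(x_t))} \le \frac1n\sum_j \epsilon_t^{(j)} = \overline\epsilon_t$. I would also note the reverse-triangle consequence $\norm{\nabla f_i(x_t)} \le \norm{\nabla \tilde f_i^t} + \epsilon_t^{(i)}$, which is what lets the final bound be stated purely in terms of the surrogate norms $\norm{\nabla \tilde f_i^t}$ rather than the true gradient norms.

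Next, I would write $\text{EI}_i(x_t) - \tilde{\text{EI}}_i(x_t) = \alpha_t\big(\langle \nabla f(x_t), \nabla f_i(x_t)\rangle - \langle \nabla \tilde f^t, \nabla \tilde f_i^t\rangle\big) - \frac{\alpha_t^2 L}{2}\big(\norm{\nabla f_i(x_t)}^2 - \norm{\nabla \tilde f_i^t}^2\big)$ and bound the two brackets separately. For the bilinear term I use the add-and-subtract identity $\langle \nabla f(x_t), \nabla f_i(x_t)\rangle - \langle \nabla \tilde f^t, \nabla \tilde f_i^t\rangle = \langle \nabla f(x_t) - \nabla \tilde f^t, \nabla f_i(x_t)\rangle + \langle \nabla \tilde f^t, \nabla f_i(x_t) - \nabla \tilde f_i^t\rangle$, then Cauchy--Schwarz with the estimates above to get an upper bound $\overline\epsilon_t\big(\norm{\nabla \tilde f_i^t}+\epsilon_t^{(i)}\big) + \norm{\nabla \tilde f^t}\,\epsilon_t^{(i)}$. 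For the quadratic term I expand $\norm{\nabla f_i(x_t)}^2 - \norm{\nabla \tilde f_i^t}^2 = 2\langle \nabla \tilde f_i^t, \nabla f_i(x_t) - \nabla \tilde f_i^t\rangle + \norm{\nabla f_i(x_t) - \nabla \tilde f_i^t}^2$, whose absolute value is at most $2\norm{\nabla \tilde f_i^t}\,\epsilon_t^{(i)} + (\epsilon_t^{(i)})^2$. Multiplying the two bounds by $\alpha_t$ and $\frac{\alpha_t^2 L}{2}$ respectively and collecting terms reproduces exactly the expression for $r_i^t$ in \eqref{eq: r_i}.

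There is no substantive obstacle here: the argument is a short sequence of triangle-inequality and Cauchy--Schwarz steps. The only point requiring care is the bookkeeping needed to match the stated form of $r_i^t$ precisely --- one must consistently convert every occurrence of $\norm{\nabla f_i(x_t)}$ into $\norm{\nabla \tilde f_i^t}+\epsilon_t^{(i)}$ (and control $\nabla f(x_t)$ by $\nabla \tilde f^t$), and choose the expansion of the squared-norm difference that already isolates $\norm{\nabla \tilde f_i^t}$ as the coefficient, so that no stray terms involving the true gradients survive in the final estimate.
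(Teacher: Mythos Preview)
Your proposal is correct and follows essentially the same approach as the paper: split into the bilinear and quadratic pieces, apply an add-and-subtract plus Cauchy--Schwarz to each, and use $L_i$-smoothness to introduce $\epsilon_t^{(i)}$ and $\overline\epsilon_t$. The only cosmetic differences are which factor you pivot around in the bilinear term and that you use the polarization identity for the squared-norm difference where the paper uses $\norm{a}^2-\norm{b}^2=\langle a+b,a-b\rangle$ followed by $\norm{a+b}\le 2\norm{\nabla\tilde f_i^t}+\epsilon_t^{(i)}$; both routes land on the identical bound.
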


Lemma \ref{lemma: r_i} indicates that the difference between $\text{EI}_i(x_t)$ and $ \tilde{\text{EI}}_i(x_t)$ has upper bound controlled by $r_i^t$. Therefore, when optimizing the true $\text{EI}$, we need to take this uncertainty $r_i^t$ into account.


To maximize the true $\text{EI}$ under uncertainty about the true gradients, we adopt a bandit-type approach inspired by Upper Confidence Bound (UCB) methods \cite{slivkinsIntroductionMultiArmedBandits2019} to sequentially select gradient queries based on deterministic upper bounds $r_i^t$ and
$\tilde{\text{EI}}_i$. As such, in selecting the user $i_t$ to query, instead of choosing a user that maximizes the estimated $\tilde{\text{EI}}_i$, we maximize the upper bound of $\text{EI}_i$, i.e.
$$
i_t := \text{argmax}_i \left \{ \tilde{\text{EI}}_i(x_t) + r_i^t \right\},
$$ 
where $r_i^t $, as defined in Lemma \ref{lemma: r_i}, represents a worst-case upper bound of $\left|\text{EI}_i(x_t) - \tilde{\text{EI}}_i(x_t) \right|$ at time $t$. We will see that this UCB-style querying heuristic ensures that the regret of our approach relative to the baseline OGQ algorithm is bounded by  $2r_i^t$. Specifically, we have the following proposition:

\begin{proposition}
    \label{prop: UCB property}

    Let $i_t^*, i_t$ denotes the index picked by maximizing $\text{EI}_i(x_t)$ and  $\tilde{\text{EI}}_i(x_t) + r_i^t$, respectively, i.e. $$i_t^* := \text{argmax}_i \text{ EI}_i(x_t), \quad i_t := \text{argmax}_i \left \{ \tilde{\text{EI}}_i(x_t) + r_i^t \right\}.$$
    Then at time step $t$ we have,
    $$
    \text{EI}_{i_t^*}(x_t) - \text{EI}_{i_t}(x_t)  \le 2 r_t^i.
    $$

\end{proposition}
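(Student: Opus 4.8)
The plan is to run the standard UCB / optimism argument. Write $f_i := \mathrm{EI}_i(x_t)$ and $\tilde f_i := \tilde{\mathrm{EI}}_i(x_t)$ for brevity within this sketch. By Lemma~\ref{lemma: r_i} each confidence interval $[\tilde f_i - r_i^t, \tilde f_i + r_i^t]$ contains the true value $f_i$; in particular $\tilde f_i + r_i^t \ge f_i$ for every $i$, so the upper-confidence score overestimates the truth. This is the ``optimism'' half of the argument.

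The chain of inequalities I would write out, starting from $\mathrm{EI}_{i_t^*}(x_t)$, is: first use optimism at index $i_t^*$, namely $\mathrm{EI}_{i_t^*}(x_t) \le \tilde{\mathrm{EI}}_{i_t^*}(x_t) + r_{i_t^*}^t$; then use that $i_t$ is the \emph{argmax} of the upper-confidence score, so $\tilde{\mathrm{EI}}_{i_t^*}(x_t) + r_{i_t^*}^t \le \tilde{\mathrm{EI}}_{i_t}(x_t) + r_{i_t}^t$; then use the other side of the confidence bound at index $i_t$, i.e. $\tilde{\mathrm{EI}}_{i_t}(x_t) \le \mathrm{EI}_{i_t}(x_t) + r_{i_t}^t$ (again from Lemma~\ref{lemma: r_i}, which bounds the absolute difference). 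Chaining these three steps gives
\[
\mathrm{EI}_{i_t^*}(x_t) \le \mathrm{EI}_{i_t}(x_t) + 2 r_{i_t}^t,
\]
which is exactly the claimed bound (the statement's $r_t^i$ is understood as $r_{i_t}^t$, the confidence radius of the selected arm).

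There is really no hard step here --- the only thing to be careful about is bookkeeping of which index each $r$ is attached to, and making sure the application of Lemma~\ref{lemma: r_i} is invoked as a two-sided bound $|\mathrm{EI}_i(x_t) - \tilde{\mathrm{EI}}_i(x_t)| \le r_i^t$ (used once in each direction). One subtlety worth a sentence is that Lemma~\ref{lemma: r_i} must hold simultaneously for all $i$ at the fixed time $t$, which it does since $r_i^t$ is a deterministic worst-case bound rather than a high-probability one; hence no union bound or failure probability enters, and the conclusion is a deterministic inequality. I would also note in passing that the same argument shows $|\mathrm{EI}_{i_t^*}(x_t) - \mathrm{EI}_{i_t}(x_t)| \le 2\max(r_{i_t}^t, r_{i_t^*}^t)$ if one wants a symmetric form, but the one-sided version in the statement suffices for the downstream regret analysis of SGQ.
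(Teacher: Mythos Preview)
Your proposal is correct and is precisely the standard UCB optimism argument. The paper does not actually include an explicit proof of Proposition~\ref{prop: UCB property} (it is treated as routine, with the surrounding text merely referencing the UCB heuristic and Lemma~\ref{lemma: r_i}), so there is nothing to compare against beyond noting that your three-step chain---optimism at $i_t^*$, argmax property of $i_t$, then the other side of Lemma~\ref{lemma: r_i} at $i_t$---is exactly the intended derivation.
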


Note that if we further apply Lemma \ref{lemma: EI difference}, we will obtain
    $$
    \begin{aligned}
    \text{EI}_{i_t}(x_t) - \frac{1}{n} \sum_n \text{EI}_i(x_t) &\ge \text{EI}_{i_t^*}(x_t) - \frac{1}{n} \sum_n \text{EI}_i(x_t) -2r_i^t \\
    &\ge \sqrt{\frac{\mathrm{Var}[ \{ \text{EI}_i(x_t) \}_{i=1}^n ]}{c}} - 2r_i^t.
    \end{aligned}
    $$
    
Therefore, Proposition \ref{prop: UCB property} indicates that the per-step improvement of the SGQ algorithm is lower bounded by $\sqrt{\frac{\mathrm{Var}[ \{ \text{EI}_i(x_t) \}_{i=1}^n ]}{c}} - 2r_i^t$. As long as the $r_i^t$ is controlled to be sufficiently small, the SGQ algorithm guarantees improvement compared to classical SGD. We are now ready to introduce the SGQ algorithm.

\begin{algorithm2e}
	\caption{SGQ Algorithm}
	\label{alg: proposed}
	\SetAlgoNoLine
	\DontPrintSemicolon
	\LinesNumbered
	\KwIn{Stepsize $\alpha_t$, initial value $x_0$, Lipschitz constant $L_i$, $i \in [[1,n]]$, random explore probability $p$ }
	Initialize to query $ \nabla \tilde{f}_i^t = \nabla f_i(x_0)$, and set $\tau_i^t = 0$ for each $i$
 
	\For{$t=0, 1,2,\ldots,T-1$}{ 
        Set $\epsilon^{(i)}_t = L_i \norm{x_{\tau_i^t}-x_t}$, and $\overline{\epsilon_t} = \sum_{j=1}^n \epsilon_t^{(j)}/n$ \;
        \If{$\xi_t \sim \text{Uniform}(0,1) < p$}{
	        Select $i_t$ uniformly from $\{1, 2, \ldots, n\}$ 
	    }
	    \Else{
	    Compute $\tilde{\text{EI}}_i(x_t)$ defined in   Equation \eqref{eq: EI_tilde}\;
            Compute $r_i^t$ defined in Equation \eqref{eq: r_i}
            \;
	        Find maximizer $i_t := \text{argmax}_i \{\tilde{\text{EI}}_i(x_t) + r_i^t\}$  \;
	    }
        Query user $i_t$ for true gradient $\nabla f_{i_t}(x_t)$ \;
        Update $\nabla \tilde{f}^t_{i_t} = \nabla f_{i_t} (x_t)$, and $\tau_{i_t}^t = t$\;
        Iterate $x_{t+1} = x_t - \alpha_t \nabla \tilde{f}^t_{i_t}$ \;}
        
        \KwOut{$x_T$}
\end{algorithm2e}

\subsection{Analysis for SGQ}

To analyze the SGQ algorithm, we impose an additional assumption that upper bounds the gradient difference between $f_i$'s:
\begin{assumption}
\label{assumption: Delta bound}
There exist a constant $\Delta >0 $ such that for any $i \in [[1,n]]$, we have
$$ \norm{\nabla f_i(x) - {\textstyle (1/n) \sum_{j=1}^n} \nabla f_j(x)} \le \Delta, \quad \forall x \in \R^d.$$
\end{assumption}

Assumption \ref{assumption: Delta bound} is a common assumption in the biased SGD literature \cite{demidovichGuideZooBiased2023}. Now we are ready to establish the performance guarantee for the SGQ algorithm.
\begin{theorem}[SGQ Algorithm]
    \label{thm: SGQ}
    Let Assumption \ref{assumption: f_i convex}-\ref{assumption: Delta bound} hold and define the optimality gap by $G_t := f(x_t) - \inf f$. If a constant stepsize is applied $\alpha_t \equiv \alpha$ and satisfies 
    $$
    \begin{aligned}\alpha \le \min\Bigg \{ & \frac{1 - \sqrt{1 - p/2n} }{L_{\max}},  \quad \frac{\mu}{4L L_{\max}}, \\
    &\quad \frac{p}{96n(L +L_{\max})}\cdot \frac{1}{1-p} \Bigg \}, 
    \end{aligned}$$
    then it holds that
    \begin{equation} 
    \label{eq: Proposed_bound}
    \begin{aligned}
        \E[ &G_t ] \le \left[1 -   \left( 1 + \frac{(1-p)(2\sqrt{C_1} + \sqrt{C_2})}{\sqrt{2c}} \right)\alpha \mu \right ]^t G_0  \\
         + &\frac{24\alpha (L + L_{\max})n}{p\mu } \frac{(1 - p) \sqrt{2c} }{\sqrt{2c} +(1-p)(2\sqrt{C_1} + \sqrt{C_2})}\cdot \Delta^2 \\
        +& \frac{\alpha L}{\mu} \frac{\sqrt{2c} - 2(1-p)\sqrt{C_2}}{\sqrt{2c} +(1-p)(2\sqrt{C_1} + \sqrt{C_2})} \sigma_f^* + \mathcal{O}((1-p)\alpha^2),
    \end{aligned}
    \end{equation}
    where $\sigma_f^* = \mathrm{Var}\left[ \{\nabla f_i(x^*)\}_{i=1}^n \right]$ is the gradient noise, and the constant hidden in $\mathcal{O}(\cdot)$ is a rational polynomial that depends on $\sqrt{C_1}, \sqrt{C_2}, \mu, \sqrt{c}, \Delta, L, L_{\max}$. 
\end{theorem}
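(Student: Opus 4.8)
The plan is to derive a one-step recursion for $\E[G_{t+1}\mid\mathcal F_t]$ that mirrors the OGQ analysis behind Theorem~\ref{thm: OGQ}, but carries two extra error sources — the exploration coin with parameter $p$ and the staleness of the surrogate gradients — and then unroll it into a geometric series. First I would invoke the descent lemma exactly as in the derivation preceding Algorithm~\ref{alg: ideal}, using the crucial point that the SGQ update $x_{t+1}=x_t-\alpha\nabla\tilde f^t_{i_t}$ actually moves along the \emph{fresh} gradient $\nabla f_{i_t}(x_t)$, since the surrogate of the queried user is refreshed before the iterate update and the surrogates are used only for \emph{selection}. This gives the deterministic bound $G_{t+1}\le G_t-\text{EI}_{i_t}(x_t)$. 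Conditioning on $\mathcal F_t$ and splitting on the exploration coin $\xi_t$: with probability $p$ the index is uniform, contributing $\frac1n\sum_i\text{EI}_i(x_t)$ in expectation; with probability $1-p$ the index is the UCB maximizer, for which Proposition~\ref{prop: UCB property} gives $\text{EI}_{i_t}(x_t)\ge\max_j\text{EI}_j(x_t)-2r_{i_t}^t$ and Lemma~\ref{lemma: EI difference} gives $\max_j\text{EI}_j(x_t)\ge\frac1n\sum_i\text{EI}_i(x_t)+\sqrt{\mathrm{Var}[\{\text{EI}_i(x_t)\}]/c}$. Together these yield
\[
\E[G_{t+1}\mid\mathcal F_t]\le G_t-\frac1n\sum_i\text{EI}_i(x_t)-(1-p)\Big(\sqrt{\mathrm{Var}[\{\text{EI}_i(x_t)\}]/c}-2r_{i_t}^t\Big).
\]

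The next step is the ``noiseless'' part, handled as in the OGQ proof. We have $\frac1n\sum_i\text{EI}_i(x_t)=\alpha\norm{\nabla f(x_t)}^2-\frac{\alpha^2L}{2}\cdot\frac1n\sum_i\norm{\nabla f_i(x_t)}^2$, and Assumption~\ref{assumption: Hete} together with $\sqrt{a^2+b^2}\ge(a+b)/\sqrt2$ gives $\sqrt{\mathrm{Var}/c}\ge\frac{1}{\sqrt{2c}}\big(\sqrt{C_1}\,\alpha\norm{\nabla f(x_t)}^2+\sqrt{C_2}\,\alpha^2L\cdot\frac1n\sum_i\norm{\nabla f_i(x_t)}^2\big)$. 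Applying the PL inequality $\norm{\nabla f(x_t)}^2\ge2\mu G_t$, the standard convexity-and-smoothness bound $\frac1n\sum_i\norm{\nabla f_i(x_t)}^2\le4L_{\max}G_t+2\sigma_f^*$ (which uses Assumptions~\ref{assumption: f_i convex}--\ref{assumption: f_i smooth} and $\nabla f(x^*)=0$), and absorbing the residual $\Theta(\alpha^2LL_{\max})G_t$ terms into the main term via $\alpha\le\mu/(4LL_{\max})$, I expect to recover the rate coefficient $a:=1+\frac{(1-p)(2\sqrt{C_1}+\sqrt{C_2})}{\sqrt{2c}}$ on $\alpha\mu G_t$ and a $\sigma_f^*$ contribution whose numerator is $\sqrt{2c}-2(1-p)\sqrt{C_2}$, nonnegative because $c\ge4C_2$ (the footnote to Theorem~\ref{thm: OGQ}).

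The hard part is bounding the accumulated regret $\E[r_{i_t}^t]$ from stale surrogates. Here $r_i^t$ is controlled by the staleness $\epsilon_t^{(i)}=L_i\norm{x_t-x_{\tau_i^t}}$ (and its average $\overline\epsilon_t$) times the surrogate norms. I would (i) telescope the displacement, $\norm{x_t-x_{\tau_i^t}}\le\alpha\sum_{s=\tau_i^t}^{t-1}\norm{\nabla f_{i_s}(x_s)}\le\alpha\sum_{s=\tau_i^t}^{t-1}\big(\norm{\nabla f(x_s)}+\Delta\big)$, where Assumption~\ref{assumption: Delta bound} turns individual surrogate norms into $\norm{\nabla f(x_s)}+\Delta$; (ii) observe that the random-exploration step queries any fixed user with probability at least $p/n$ each iteration, so $t-\tau_i^t$ is stochastically dominated by a geometric random variable of mean $O(n/p)$; and (iii) combine (i)--(ii) through a short Lyapunov/potential argument that folds the recent squared-gradient mass $\sum_s\norm{\nabla f(x_s)}^2$ back into the recursion, using the step-size cap $\alpha\le\frac{p}{96n(L+L_{\max})}\cdot\frac{1}{1-p}$ (and the remaining caps where needed) to keep those extra $G_s$-proportional terms subsumed by the contraction. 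What survives is exactly the additive $\Delta^2$ mass, which after dividing the geometric series by $a\alpha\mu$ produces the steady-state inflation $\frac{24\alpha(L+L_{\max})n}{p\mu}\cdot\frac{(1-p)\sqrt{2c}}{\sqrt{2c}+(1-p)(2\sqrt{C_1}+\sqrt{C_2})}\Delta^2$, together with the lower-order $\mathcal O((1-p)\alpha^2)$ remainder that collects all quadratic-in-$\alpha$ cross terms. The genuinely delicate point is the coupling: $\tau_i^t$, the trajectory $\{x_s\}$, and the UCB index $i_t$ are all measurable with respect to the same past coin flips, so the staleness estimate must be taken in expectation with care — e.g.\ by conditioning on the last query time and bounding intervening gradients along the realized trajectory, and by the worst-case reduction $r_{i_t}^t\le\max_i r_i^t\le\sum_i r_i^t$ that explains the factor $n$.

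Finally, the recursion takes the form $\E[G_{t+1}]\le(1-a\alpha\mu)\E[G_t]+b$, with $a$ as above and $b$ collecting the $\sigma_f^*$, $\Delta^2$, and $\mathcal O(\alpha^2)$ pieces; unrolling gives $\E[G_t]\le(1-a\alpha\mu)^tG_0+b/(a\alpha\mu)$, which is precisely \eqref{eq: Proposed_bound}. As a sanity check, letting $p\to1$ kills every $(1-p)$-weighted term and recovers the SGD bound \eqref{SGD_bound} up to the $\mathcal O(\alpha^2)$ correction.
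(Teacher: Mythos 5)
Your overall architecture matches the paper's proof: condition on the exploration coin, use Proposition~\ref{prop: UCB property} plus Lemma~\ref{lemma: EI difference} on the UCB branch, expand $\frac1n\sum_i\text{EI}_i(x_t)$, apply Assumption~\ref{assumption: Hete} with $\sqrt{a^2+b^2}\ge (a+b)/\sqrt2$, then PL and the variance-transfer bound $\E_i[\norm{\nabla f_i(x_t)}^2]\le 4L_{\max}G_t+2\sigma_f^*$, and finally unroll a one-step recursion. Your observation that the iterate moves along the fresh gradient $\nabla f_{i_t}(x_t)$ (so the descent lemma applies with the true $\text{EI}_{i_t}$) is exactly the point the paper relies on, and your $p\to1$ sanity check is the same one the paper makes.

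The gap is in the step you label (iii). The paper never needs a Lyapunov/potential argument: after writing $\E_{\xi_t}[(\epsilon_t^{(i)})^2]$ as a geometric mixture over the staleness $k=t-\tau_i^t$ (each user is hit with probability at least $p/n$ per step), it uses its Lemma~\ref{lemma: x_(t-k) - x_t} and Corollary~\ref{coro: x_(t-k) - x_t} to bound $\norm{x_{t-k}-x_t}$ by sums of gradient norms evaluated \emph{at the current iterate} $x_t$, paying only factors $(1-\alpha L_{\max})^{-k}$ that are absorbed by the stepsize condition ($q=\tfrac{1-p/n}{(1-\alpha L_{\max})^2}\le 1-\tfrac{p}{2n}$); Assumption~\ref{assumption: Delta bound} then converts $\norm{\nabla f_{i_{t-l}}(x_t)}^2$ into $2(\Delta^2+\norm{\nabla f(x_t)}^2)$, so $\E[r_i^t]\le \alpha^2\tfrac{24n(L+L_{\max})}{p}(\Delta^2+\norm{\nabla f(x_t)}^2)+\mathcal O(\alpha^3)$ depends only on $x_t$ and the one-step recursion closes immediately. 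In your version the displacement is expressed through $\norm{\nabla f(x_s)}+\Delta$ at \emph{past} iterates $x_s$, so the resulting recursion is not $\E[G_{t+1}]\le(1-a\alpha\mu)\E[G_t]+b$ but one involving geometrically weighted past gaps $G_s$ (and the coupling between $\tau_i^t$, the trajectory, and the UCB index that you flag); the ``short Lyapunov/potential argument'' that is supposed to fold this history back into the contraction is precisely the hard part, and it is asserted rather than carried out, nor is it clear it would reproduce the stated constant $\tfrac{24n(L+L_{\max})}{p}$. A second, smaller point: the factor $n$ does not come from the reduction $r_{i_t}^t\le\sum_i r_i^t$; it comes from the staleness horizon, i.e.\ $1/(1-q)\lesssim 2n/p$ in the geometric sum, and the paper's bound on $\E[r_i^t]$ is uniform in $i$ (only $L_i\le L_{\max}$ is used), so no union or sum over users is needed.
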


This result generalizes the analysis in \eqref{SGD_bound} for SGD: If we set the random explore probability $p=1$ (i.e. querying by uniformly sampling), the performance guarantee of SGQ naturally recovers \eqref{SGD_bound}. The key step of Theorem \ref{thm: SGQ} is to control the cumulative regret $\sum_{k=1}^t 2 r_{i_k}^t$ with respect to the OGQ algorithm. This estimation is achieved by first bounding each $\norm{x_{\tau_i^t} - x_t}$ given the updating rule of SGQ, and then apply Lemma \ref{lemma: r_i} to obtain the bound for $r_{i_k}^t$. The details of the proof can be found in the appendix.

Theorem \ref{thm: SGQ} shows that SGQ achieves a faster exponential decaying rate in the transient-state, similar to OGQ, though it may introduce an additional variance term. However, in certain scenarios, this increase of variance is acceptable. For example, when the optimization task does not require a high-precision solution but rather a low-precision one, SGQ can achieve the same level of precision with a strictly smaller number of queries compared to SGD. This is particularly advantageous in settings where querying is costly, as SGQ reduces the number of queries needed to reach an acceptable solution while a further precise solution is not necessary. In such cases, the benefit of strategic querying becomes evident: it accelerates transient-state convergence, enabling the algorithm to reach a satisfactory solution more quickly while minimizing the overall querying cost. By focusing on more informative users in each query, we have proved that SGQ effectively utilizes the heterogeneity between users to optimize the objective function more efficiently, making it a powerful alternative to traditional methods like SGD, especially in applications with expensive data acquisition cost.

Furthermore, in our numerical simulations, we observe that SGQ does not necessarily lead to increased steady-state noise. Instead, it maintains or even reduces the noise level, which will be discussed in the next section.

\begin{remark}
We note that there are indeed algorithms like SAGA \cite{defazioSAGAFastIncremental2014} that achieve linear convergence with only one query per iteration. However, there are two key distinctions that set our work apart from algorithms like SAGA: 1) Although they achieve linear convergence, they do not offer any acceleration in transient-state compared to SGD, as will be demonstrated in the numerical simulations in the next section; 2) As discussed earlier, the development of OGQ/SGQ is not intended to propose a new variant of SGD that competes with a specific state-of-the-art algorithm. Rather, our goal is to establish a broader framework for understanding how querying strategies can accelerate the transient performance of stochastic gradient-based methods.
Moreover, we also observe numerically that it is possible to equip algorithms like SAGA with a querying strategy that improves their transient-state performance while still preserving its linear convergence rate.


\end{remark}

\section{Numerical Experiments} 
\label{section: numerical}
In this section, we present a numerical experiment to evaluate the performance of the Oracle Gradient Querying (OGQ) and Strategic Gradient Querying (SGQ) algorithms in comparison with SGD, SAGA, as well as SVRG. Our goal is to demonstrate the performance of our proposed algorithms and illustrate the benefits of strategic querying in finite-sum optimization.

We will compare algorithms in a 1-D toy example, where we have four component functions $f_1(x) = (x-2)^2$, $f_2(x) = (x-1)^2$, $f_3(x) = (x+1)^2, f_4(x) = (x+2)^2$, and the overall objective function is given by: $f(x) = \frac{1}{4}\sum_{i=1}^4 f_i(x) = x^2 + 5/2$. In this setting, we compare the convergence behavior of SGD, OGQ, and SGQ, tracking their objective error over iterations. In specific, we set the initial point at $x_0 = 5$, the step size to be a constant $\alpha = 1.5 \times 10^{-2}$. 

Figure~\ref{fig: whole comparison} presents a performance comparison between OGQ, SGQ, and several baseline algorithms, including SGD, SAGA, and SVRG. For this experiment, the SGQ hyperparameter $p$ is set to $0.3$, and SVRG computes the full gradient once every 10 iterations. The plot shows the expected objective error, defined as $\mathbb{E}[f(x_t)] - \inf f$, where the expectation is estimated by averaging over 200 independent trials. The x-axis represents the total number of queries, which shows a direct comparison of querying efficiency across algorithms.\footnote{It is worth noting that OGQ queries only once per iteration by selecting the most “informative” user. However, to maximize the expected improvement (EI), OGQ assumes access to the full gradient information at each step. As such, OGQ should be regarded solely as a benchmark algorithm that makes idealized decisions based on complete information. It is not intended for practical deployment due to its unrealistic information requirements.}
As expected, OGQ achieves the transient acceleration while also reducing variance (compared to SGD) in the steady state, thanks to its oracle-based querying strategy. Notably, although SAGA and SVRG are theoretically proven to achieve linear convergence, our SGQ algorithm significantly outperforms SGD, SAGA, and SVRG in terms of performance during the transient state. In particular, SGQ reaches the same level of precision using roughly half the number of queries required by SGD and SAGA.

Figure \ref{fig: p value} shows the performance of SGQ under different values of the hyperparameter $p$. A smaller $p$ indicates less random exploration, which, according to Theorem \ref{thm: SGQ}, results in better transient-state performance but may lead to higher steady-state variance. On the other extreme, when $p$ is close to 1, SGQ behaves similarly to SGD by applying random sampling most of the time. In Figure \ref{fig: p value}, we observe that as $p$ decreases, the transient state of SGQ is accelerated and increasingly approaches the transient-state performance of OGQ.

These results confirm that, a well-designed querying strategy can accelerate transient-state for SGD, supporting the theoretical findings in Theorems \ref{thm: OGQ} and \ref{thm: SGQ}. Additionally, we observe that SGQ exhibits lower steady-state variance than SGD in our numerical experiment. This suggests that the mechanism behind SGQ’s querying strategy may inherently contribute to variance reduction. As a promising future research direction, we aim to further investigate the underlying reasons for this variance reduction and its broader implications.

\begin{figure}
    \centering
    \includegraphics[width=0.8\linewidth]{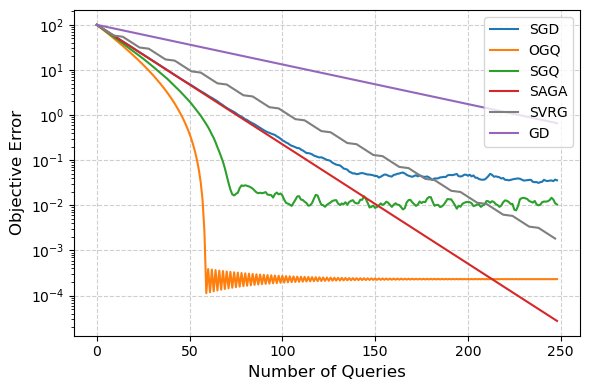}
    \caption{The averaged objective error comparison between SGD, OGQ, SGQ, SAGA, and SVRG.}
    \label{fig: whole comparison}
    \vspace{-0.4cm}
\end{figure}

\begin{figure}
    \centering
    \includegraphics[width=0.8\linewidth]{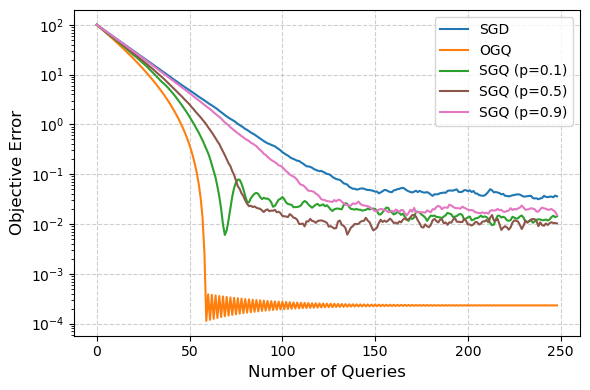}
    \caption{The comparison of SGQ with different exploration parameter $p$.}
    \label{fig: p value}
    \vspace{-0.4cm}
\end{figure}

\section{Conclusion} 
\label{section: conclusion}

This paper considers the finite-sum optimization problem with first-order queries and investigates the impact of strategic querying on querying efficiency. We first introduce Oracle Gradient Querying (OGQ) as an upper-bound benchmark that selects the most informative query at each step using full gradient knowledge. We then propose Strategic Gradient Querying (SGQ), a practical algorithm that mimics OGQ’s querying strategy while maintaining the same number of queries as standard SGD. Our theoretical analysis proves that OGQ and SGQ both improve transient-state convergence, and OGQ additionally reduces the steady-state variance. Our numerical experiments further show that SGQ achieves faster convergence while also potentially reducing the steady-state variance in practice.

Looking ahead, there are several directions that we would like to explore in the future. First, we can further explore the theory to support why SGQ can potentially reduce the steady-state variance in practice.  Second, we can extend our study by comparing strategic querying with other stochastic gradient-based algorithms, such as accelerated SVRG \cite{driggsAcceleratingVariancereducedStochastic2022} to further demonstrate the impact of strategic querying. We can also generalize our strategic querying analysis to the non-convex optimization setting.

\section{Supplementary Discussion}
\label{section: justification}

In this section, we provide discussions that further justify the assumptions that we made in previous sections. 

\subsection{Determine $C_1$, $C_2$ in Assumption \ref{assumption: Hete}}

This assumption states that the variance of EI scales with the gradient norms $\norm{\nabla f}$ and $\frac{1}{n}\sum_{i=1}^n\norm{\nabla f_i}^2 $, where $C_1$ and $C_2$ serve as scaling coefficients. A simple example of $\{f_i\}_{i=1}^n$ satisfying this requirement is the class of quadratic functions: $f_i(x) = a_i(x - b_i)^2$, where we require $n > 3$ and the coefficients $\{a_i\}_{i=1}^n$, $\{b_i\}_{i=1}^n$ are not identical, respectively.

We then present the following result to characterize possible values of $C_1$ and $C_2$. For simplicity, we focus on the case where $x$ is scalar-valued, leaving the study of higher-dimensional cases for future work. Specifically, the formulation of $C_1$ and $C_2$ is given by the following proposition:

\begin{proposition} 
\label{prop: C_1, C_2}
Let $ x \in \mathbb{R}$ be scalar valued and denote the empirical mean, variance, skewness, and kurtosis of $ \{ f_i'(x) \} $ with respect to $i$ as $ \bar{g}(x) $, $ V_g(x) $, $ S_g(x) $, and $ \kappa_g(x) $, respectively. Suppose the step size $ \alpha $ is chosen such that $ \alpha \leq \frac{1}{2L} $, then we can set 
$$
C_1 = \inf_{x\in \R} \frac{\delta(x) V_g(x)}{4 \bar{g}^2(x)}, \quad C_2 = \inf_{x\in \R}\frac{\delta(x) (\kappa_g(x) - 1)V_g(x)^2}{4\left(V_g(x) + \bar{g}^2(x) \right)^2}.
$$ 
where $\delta(x) = 1 - \frac{S_g(x)}{\sqrt{\kappa_g(x) -1}}$ and the positivity of $ \delta(x) $ is ensured by one of the Pearson's inequalities \cite{sharmaSkewnessKurtosisNewtons2015}. 


\end{proposition}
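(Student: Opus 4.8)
Since $x$ is scalar, abbreviate $g_i := f_i'(x)$ and $\bar g := f'(x) = \tfrac1n\sum_i g_i$, and observe that, as a function of $g_i$, the quantity $\text{EI}_i(x) = \alpha\bar g\,g_i - \tfrac{\alpha^2 L}{2}g_i^2$ is a concave parabola with vertex at $v := \bar g/(\alpha L)$; completing the square, $\text{EI}_i(x) = -\tfrac{\alpha^2 L}{2}(g_i - v)^2 + (\text{a quantity independent of }i)$, so $\mathrm{Var}[\{\text{EI}_i(x)\}] = \tfrac{\alpha^4 L^2}{4}\,\mathrm{Var}[\{(g_i - v)^2\}]$ (the variance of $\text{EI}_i$ is the squared curvature times the variance of the squared deviation from the vertex). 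Equivalently, writing $\delta_i := g_i - \bar g$ and $\bar h := \bar g - v = \bar g\big(1 - \tfrac1{\alpha L}\big)$, the exact expansion $\text{EI}_i(x) = \text{const} - \alpha^2 L\,\bar h\,\delta_i - \tfrac{\alpha^2 L}{2}\delta_i^2$ gives, in terms of the empirical central moments $m_2 = V_g$, $m_3 = S_g V_g^{3/2}$, $m_4 = \kappa_g V_g^2$ of $\{g_i\}$,
\[
\mathrm{Var}\big[\{\text{EI}_i(x)\}\big] \;=\; \alpha^4 L^2\Big[\,V_g\,\bar h^2 + m_3\,\bar h + \tfrac14\big(m_4 - V_g^2\big)\Big].
\]
Completing the square in $\bar h$ and using $m_3 = S_g V_g^{3/2}$, this equals $\alpha^4 L^2\big[V_g(\bar h + \tfrac{S_g\sqrt{V_g}}{2})^2 + \tfrac{V_g^2}{4}(\kappa_g - 1 - S_g^2)\big]$, a sum of two nonnegative terms --- the second precisely because of Pearson's inequality $\kappa_g \ge 1 + S_g^2$, which is also what makes $\delta(x) = 1 - S_g(x)/\sqrt{\kappa_g(x)-1}$ nonnegative.

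Next I would reduce the claim to an elementary scalar inequality. Since $C_1$ and $C_2$ are infima over $x$, at the given $x$ one has $C_1 \le \tfrac{\delta(x)V_g(x)}{4\bar g^2(x)}$ and $C_2 \le \tfrac{\delta(x)(\kappa_g(x)-1)V_g^2(x)}{4(V_g(x)+\bar g^2(x))^2}$; recalling that $\tfrac1n\sum_i\norm{\nabla f_i(x)}^2 = V_g + \bar g^2$ in the scalar case, it therefore suffices to establish
\[
\mathrm{Var}\big[\{\text{EI}_i(x)\}\big] \;\ge\; \tfrac{\delta(x)V_g}{4}\,\alpha^2\bar g^2 \;+\; \tfrac{\delta(x)(\kappa_g-1)V_g^2}{4}\,\alpha^4 L^2 .
\]
Here the step-size constraint $0 < \alpha \le \tfrac1{2L}$ enters decisively: setting $\theta := \alpha L \in (0,\tfrac12]$ one has $\bar h = \bar g\,\tfrac{\theta-1}{\theta}$, hence $\alpha^2 L^2\bar h^2 = \bar g^2(1-\theta)^2 \ge \tfrac14\bar g^2$ (the vertex stays far from the bulk of the data), which is exactly what yields a term of order $\alpha^2\bar g^4$ and forces the stated shape of $C_1$, while the residual $\tfrac{\alpha^4 L^2}{4}(\kappa_g-1-S_g^2)V_g^2$ accounts for $C_2$ after division by $(V_g+\bar g^2)^2$. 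Substituting $\bar h = \bar g\tfrac{\theta-1}{\theta}$ into the variance identity and clearing $\alpha$, the displayed inequality becomes the nonnegativity on $(0,\tfrac12]$ of a quadratic $G(\theta)$ whose coefficients depend only on $\bar g, V_g, S_g, \kappa_g$. I would finish by checking this on the interval: as $\theta\to 0^+$, $G(\theta)\to V_g\bar g^2\big(1-\tfrac{\delta}{4}\big)\ge 0$ since $\delta\le 2$; at $\theta=\tfrac12$, using $1-\delta = S_g/\sqrt{\kappa_g-1}$, $G(\tfrac12)$ collapses to a multiple of a perfect square plus a skewness term controlled by the arithmetic--geometric bound $\tfrac{V_g\bar g^2}{\sqrt{\kappa_g-1}} + \tfrac{\sqrt{\kappa_g-1}V_g^2}{4} \ge V_g^{3/2}|\bar g|$ together with Pearson's inequality; and should the vertex of $G$ fall in the open interval, nonnegativity of its minimum follows from a discriminant estimate that again invokes $\kappa_g \ge 1 + S_g^2$.

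The crux --- and the only genuinely delicate point --- is the skewness cross-term $m_3\bar h$ in the variance expansion. Its sign is not controlled, so a naive split of the target lower bound (matching the $\bar g^4$-part against the affine piece of $\text{EI}_i$ and the $V_g^2$-part against its curvature piece) fails exactly when the gradient mean and the skewness are ``aligned''. The factor $\delta(x) = 1 - S_g(x)/\sqrt{\kappa_g(x)-1}$ is calibrated precisely to absorb this cross-term: Pearson's inequality guarantees $\delta(x)\ge 0$ (this is the positivity assertion in the statement, ensuring $C_1$ and $C_2$ are well-defined and nonnegative), and the same inequality --- together with the vertex being kept at distance $\ge |\bar g|$ from the data by the step-size restriction --- is what allows the completed-square and residual pieces of $\mathrm{Var}[\{\text{EI}_i(x)\}]$ to jointly dominate. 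Everything else (the reduction to a quadratic in $\theta$, tracking the $(V_g+\bar g^2)^2$ normalization, and the endpoint and discriminant checks) is routine algebra.
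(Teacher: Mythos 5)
Your variance identity is correct and is in fact the same expansion the paper uses: writing $\theta=\alpha L$, your $\alpha^4L^2\big[V_g\bar h^2+m_3\bar h+\tfrac14(m_4-V_g^2)\big]$ equals the paper's $\alpha^2\big[(1-\theta)^2\bar g^2V_g-\theta(1-\theta)\bar g S_gV_g^{3/2}+\tfrac{\theta^2}{4}(\kappa_g-1)V_g^2\big]$, and your reduction to the pointwise inequality $\mathrm{Var}\ge\tfrac{\delta V_g\bar g^2}{4}\alpha^2+\tfrac{\delta(\kappa_g-1)V_g^2}{4}\alpha^4L^2$ (using $\tfrac1n\sum_i (f_i'(x))^2=V_g+\bar g^2$) is exactly the paper's target. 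The genuine gap is that the one nontrivial step---dominating the skewness cross term---is precisely the step you leave unexecuted. The paper settles it in two lines: from $\delta=1-S_g/\sqrt{\kappa_g-1}$ it uses $|S_g|\le(1-\delta)\sqrt{\kappa_g-1}$, so by $2ab\le a^2+b^2$ with each factor weighted by $\sqrt{1-\delta}$ one gets $\theta(1-\theta)|\bar g||S_g|V_g^{3/2}\le(1-\delta)\big[(1-\theta)^2\bar g^2V_g+\tfrac{\theta^2}{4}(\kappa_g-1)V_g^2\big]$, leaving a surviving factor $\delta$ on both squares; then $(1-\theta)^2\ge\tfrac14$ (from $\alpha\le\tfrac1{2L}$) and the normalization by $(V_g+\bar g^2)^2$ give exactly the stated $C_1,C_2$. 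Your proposal never exhibits such an absorption: it asserts that an endpoint-plus-discriminant analysis of the quadratic $G(\theta)$ ``invokes Pearson'' and works out, which is exactly the claim that needed proof.

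Moreover, the sketch as written would not go through verbatim. At $\theta=\tfrac12$ the expression collapses exactly (no AM--GM is needed): substituting $1-\delta=S_g/\sqrt{\kappa_g-1}$ gives $G(\tfrac12)=\tfrac{S_gV_g}{4\sqrt{\kappa_g-1}}\big(\bar g-\tfrac12\sqrt{(\kappa_g-1)V_g}\big)^2$, which is nonnegative only when $S_g\ge0$; for $S_g<0$ (so $\delta>1$) the pointwise inequality you target can genuinely fail at this endpoint, so your route needs either the restriction $S_g\ge0$ or $\delta$ defined through $|S_g|$---the same caveat hidden in the paper's bound $|S_g|\le(1-\delta)\sqrt{\kappa_g-1}$, but your argument makes it load-bearing at the very point you must check, and you do not flag it. Also, a discriminant estimate cannot close the interior case in general: for $S_g=0$ (so $\delta=1$) one finds $G(\theta)=\bar g^2V_g(\theta-\tfrac12)(\theta-\tfrac32)$, whose discriminant is strictly positive even though $G\ge0$ on $(0,\tfrac12]$; the correct finish is a minimum-over-the-interval argument (vertex location plus vertex value), which you only gesture at. So the decisive inequality remains unproven in your write-up; if you keep the quadratic-in-$\theta$ framing you must carry out these case checks explicitly, whereas importing the paper's weighted AM--GM absorption makes the $\theta$-analysis unnecessary altogether.
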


We note that the infimum operation in the definitions of $C_1$ and $C_2$ serves solely to eliminate their dependence on $x$. An alternative approach would be to define $C_1$ and $C_2$ locally with respect to $x$. In this case, Assumption \ref{assumption: Hete} would still hold but would yield different acceleration rates across different regions of $x$. The proof of Proposition \ref{prop: C_1, C_2} can be found in the appendix.

\subsection{Tighter Bound of $\tilde{c}(x)$}
Next, we provide results on estimating tighter bounds for $\tilde{c}(x)$. The key intuition that why a tighter bound is possible is that, in practical scenarios, the functions $f_i$ are usually not arbitrarily assigned. As a result, the extreme case where $\tilde{c}(x) = n-1$ is highly unlikely. Therefore, if we assume that the functions $f_i(x)$'s are independently drawn from certain underlying distribution (e.g. $f_i(x) = g(x,\xi_i)$ for some $g$ and $\xi_i$'s are i.i.d samples from some distribution). Then as a consequence, $\text{EI}_i(x)$ can also be treated as samples under another certain distribution. This allows us to further analyze the distribution of $\tilde{c}(x)$. The following proposition shows that if we denote the underlying distribution of $\text{EI}_i(x)$ by $\D_x$, then $\tilde{c}(x)$ may enjoy a tighter bound, depending on the property of $\D_x$. 

\begin{proposition}
\label{prop: tilde_c}
    Let $\text{EI}_1(x), \text{EI}_2(x), ..., \text{EI}_n(x)$ be variables sampled i.i.d from non-degenerate distribution $\D_x$ with finite mean and variance. Then there exists a scale coefficient $\tilde{c}(x)$ such that 
    $$\frac{ \frac{1}{n} \sum_{i=1}^n \text{EI}_i(x)  - \min_i \text{EI}_i(x) }{\max_i \text{EI}_i(x) - \frac{1}{n} \sum_{i=1}^n \text{EI}_i(x) } \le \tilde{c}(x),$$
     where the scale coefficient $\tilde{c}(x)$ has the following property:

    \begin{itemize}
        \item $\tilde{c}(x) \le n-1$.
        \item If $\D_x$ is sub-gaussian with variance proxy $\sigma_{\D_x}^2$, then $\tilde{c}(x) \le \mathcal{O}(\sigma_{\D_x}\sqrt{\log n})$ with probability $1 - 1/n$.
        \item If $\D_x$ is bounded by constant $B_{\D}$ (i.e. $|\text{EI}_i(x)| \le B_{\D_x}$), then $\tilde{c}(x) \le \mathcal{O}(B_{\D_x})$ with probability $1 - 1/n$.
    \end{itemize}
   
    The constant hidden in $\mathcal{O}(\cdot)$ solely depend on the density function of $\D_x$. 
\end{proposition}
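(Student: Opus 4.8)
\textbf{Proof proposal for Proposition \ref{prop: tilde_c}.}

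The plan is to treat $\tilde{c}(x)$ as a ratio of two empirical quantities built from the i.i.d.\ sample $\text{EI}_1(x),\ldots,\text{EI}_n(x)$: the numerator $\frac{1}{n}\sum_i \text{EI}_i(x) - \min_i \text{EI}_i(x)$ is the gap between the sample mean and the sample minimum, while the denominator $\max_i \text{EI}_i(x) - \frac{1}{n}\sum_i \text{EI}_i(x)$ is the gap between the sample maximum and the sample mean. To prove an upper bound on $\tilde{c}(x)$ it suffices to exhibit, with the stated probability, an \emph{upper} bound on the numerator and a matching \emph{lower} bound on the denominator; dividing gives the claim. The first bullet, $\tilde{c}(x)\le n-1$, is already established deterministically in the proof of Lemma \ref{lemma: EI difference}, so I would simply cite that argument. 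The remaining work is the two high-probability bounds.

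For the sub-Gaussian case, first I would control the numerator: since the mean of $\D_x$ is finite and $\text{EI}_i(x) - \min_i \text{EI}_i(x) \le \max_i \text{EI}_i(x) - \min_i \text{EI}_i(x)$, a standard sub-Gaussian maximal inequality gives $\max_i \text{EI}_i(x) - \mathbb{E}[\text{EI}] \le \sigma_{\D_x}\sqrt{2\log(2n)}$ and symmetrically for $-\min_i$, each with probability at least $1 - 1/(2n)$, so the numerator is $\mathcal{O}(\sigma_{\D_x}\sqrt{\log n})$. The delicate half is lower-bounding the denominator away from zero. Here I would use the assumption that $\D_x$ is non-degenerate: its variance $\sigma^2_{+} > 0$ means there is a constant $\eta(\D_x) > 0$ and probability $q(\D_x) > 0$ with $\Pr[\text{EI}_i(x) \ge \mathbb{E}[\text{EI}] + \eta] \ge q$. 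With $n$ i.i.d.\ draws, at least one sample exceeds $\mathbb{E}[\text{EI}]+\eta$ with probability $1 - (1-q)^n \ge 1 - 1/n$ for $n$ large enough, and the sample mean concentrates within $\eta/2$ of $\mathbb{E}[\text{EI}]$ with high probability by Hoeffding/sub-Gaussian concentration; hence $\max_i \text{EI}_i(x) - \frac{1}{n}\sum_i \text{EI}_i(x) \ge \eta/2$, a constant depending only on $\D_x$. Dividing the $\mathcal{O}(\sigma_{\D_x}\sqrt{\log n})$ numerator bound by this constant denominator bound yields $\tilde{c}(x) = \mathcal{O}(\sigma_{\D_x}\sqrt{\log n})$ on the intersection of these events, which has probability at least $1 - 1/n$ after adjusting constants. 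The bounded case is identical in structure: $|\text{EI}_i(x)| \le B_{\D_x}$ makes the numerator at most $2B_{\D_x}$ deterministically, and the same non-degeneracy argument lower-bounds the denominator by a positive constant, giving $\tilde{c}(x) = \mathcal{O}(B_{\D_x})$.

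The main obstacle is making the denominator lower bound rigorous and clean: one must argue that with probability $1-1/n$ the sample maximum strictly exceeds the sample mean by a $\D_x$-dependent constant. This requires both (i) a lower tail bound on the sample maximum, which follows from non-degeneracy via the "at least one large sample" argument, and (ii) an upper concentration bound on the sample mean, which needs either sub-Gaussianity (bullet two) or boundedness (bullet three). One subtlety to flag is that for small $n$ the event $1-(1-q)^n \ge 1-1/n$ may fail, so the statement should be understood asymptotically (or with an explicit $n_0(\D_x)$); I would note this and absorb it into the "constant depending only on the density of $\D_x$" clause. Finally I would remark that the claim "the hidden constant depends solely on the density of $\D_x$" is exactly what this construction delivers, since $\eta$, $q$, and the sub-Gaussian proxy (or the bound $B_{\D_x}$) are all functionals of $\D_x$ alone.
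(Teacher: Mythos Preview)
Your proposal is correct and follows essentially the same route as the paper: upper-bound the numerator (via sub-Gaussian maximal/union bounds or the deterministic range $2B_{\D_x}$), lower-bound the denominator by combining the non-degeneracy ``at least one large sample'' argument with concentration of the sample mean, then divide and union-bound the failure events. The only cosmetic difference is that the paper controls the numerator by separately bounding $\min_i$ and the sample mean about $\mu_{\D_x}$, whereas you bound it by the full range $\max_i-\min_i$; both yield the same order, and your bounded-case numerator argument is in fact slightly cleaner than the paper's.
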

Proposition \ref{prop: tilde_c} indicates that if $\text{EI}_i(x)$ can be characterized as samples from certain distributions, then $\tilde{c}(x)$ can have a tighter upper bound that doesn't grow or grows slowly as $n$ increases.

\section{Acknowledgements}
This work was supported by NSF grant \#2419982.

\bibliographystyle{IEEEtran}
\bibliography{IEEEabrv,references}

\begin{thebibliography}{10}
\providecommand{\url}[1]{#1}
\csname url@samestyle\endcsname
\providecommand{\newblock}{\relax}
\providecommand{\bibinfo}[2]{#2}
\providecommand{\BIBentrySTDinterwordspacing}{\spaceskip=0pt\relax}
\providecommand{\BIBentryALTinterwordstretchfactor}{4}
\providecommand{\BIBentryALTinterwordspacing}{\spaceskip=\fontdimen2\font plus
\BIBentryALTinterwordstretchfactor\fontdimen3\font minus \fontdimen4\font\relax}
\providecommand{\BIBforeignlanguage}[2]{{%
\expandafter\ifx\csname l@#1\endcsname\relax
\typeout{** WARNING: IEEEtran.bst: No hyphenation pattern has been}%
\typeout{** loaded for the language `#1'. Using the pattern for}%
\typeout{** the default language instead.}%
\else
\language=\csname l@#1\endcsname
\fi
#2}}
\providecommand{\BIBdecl}{\relax}
\BIBdecl

\bibitem{bottouOptimizationMethodsLargeScale2018a}
L.~Bottou, F.~E. Curtis, and J.~Nocedal, ``Optimization methods for large-scale machine learning,'' \emph{SIAM Review}, vol.~60, no.~2, pp. 223--311, Jan. 2018.

\bibitem{quHarnessingSmoothnessAccelerate2018}
G.~Qu and N.~Li, ``Harnessing smoothness to accelerate distributed optimization,'' \emph{IEEE Transactions on Control of Network Systems}, vol.~5, no.~3, pp. 1245--1260, Sep. 2018.

\bibitem{ghaderyanFastRowStochasticDecentralized2024}
D.~Ghaderyan, N.~S. Aybat, A.~P. Aguiar, and F.~L. Pereira, ``A fast row-stochastic decentralized method for distributed optimization over directed graphs,'' \emph{IEEE Transactions on Automatic Control}, vol.~69, no.~1, pp. 275--289, Jan. 2024.

\bibitem{cortesDifferentialPrivacyControl2016}
J.~Cort{\'e}s, G.~E. Dullerud, S.~Han, J.~Le~Ny, S.~Mitra, and G.~J. Pappas, ``Differential privacy in control and network systems,'' in \emph{2016 IEEE 55th Conference on Decision and Control (CDC)}, Dec. 2016, pp. 4252--4272.

\bibitem{wangDifferentialPrivacyLinear2017}
Y.~Wang, Z.~Huang, S.~Mitra, and G.~E. Dullerud, ``Differential privacy in linear distributed control systems: Entropy minimizing mechanisms and performance tradeoffs,'' \emph{IEEE Transactions on Control of Network Systems}, vol.~4, no.~1, pp. 118--130, Mar. 2017.

\bibitem{yangFederatedLearningPrivacy2020}
Q.~Yang, L.~Fan, and H.~Yu, Eds., \emph{Federated Learning: Privacy and Incentive}, ser. Lecture Notes in Computer Science.\hskip 1em plus 0.5em minus 0.4em\relax Cham: Springer International Publishing, 2020, vol. 12500.

\bibitem{cummingsOptimalDataAcquisition2023}
R.~Cummings, H.~Elzayn, E.~Pountourakis, V.~Gkatzelis, and J.~Ziani, ``Optimal data acquisition with privacy-aware agents,'' in \emph{2023 IEEE Conference on Secure and Trustworthy Machine Learning (SaTML)}, Feb. 2023, pp. 210--224.

\bibitem{driggsAcceleratingVariancereducedStochastic2022}
D.~Driggs, M.~J. Ehrhardt, and C.-B. Sch{\"o}nlieb, ``Accelerating variance-reduced stochastic gradient methods,'' \emph{Mathematical Programming}, vol. 191, no.~2, pp. 671--715, Feb. 2022.

\bibitem{johnsonAcceleratingStochasticGradient2013}
R.~Johnson and T.~Zhang, ``Accelerating stochastic gradient descent using predictive variance reduction,'' in \emph{Advances in Neural Information Processing Systems}, vol.~26.\hskip 1em plus 0.5em minus 0.4em\relax Curran Associates, Inc., 2013.

\bibitem{defazioSAGAFastIncremental2014}
A.~Defazio, F.~Bach, and S.~{Lacoste-Julien}, ``Saga: A fast incremental gradient method with support for non-strongly convex composite objectives,'' \emph{Advances in neural information processing systems}, vol.~27, 2014.

\bibitem{belakariaBayesianOptimizationIterative2023a}
S.~Belakaria, J.~R. Doppa, N.~Fusi, and R.~Sheth, ``Bayesian optimization over iterative learners with structured responses: A budget-aware planning approach,'' in \emph{Proceedings of The 26th International Conference on Artificial Intelligence and Statistics}.\hskip 1em plus 0.5em minus 0.4em\relax PMLR, Apr. 2023, pp. 9076--9093.

\bibitem{frazierBayesianOptimization2018}
P.~I. Frazier, ``Bayesian optimization,'' in \emph{Recent Advances in Optimization and Modeling of Contemporary Problems}, ser. INFORMS TutORials in Operations Research.\hskip 1em plus 0.5em minus 0.4em\relax INFORMS, Oct. 2018, ch.~11, pp. 255--278.

\bibitem{amentUnexpectedImprovementsExpected2023}
S.~Ament, S.~Daulton, D.~Eriksson, M.~Balandat, and E.~Bakshy, ``Unexpected improvements to expected improvement for bayesian optimization,'' in \emph{Advances in Neural Information Processing Systems}, A.~Oh, T.~Naumann, A.~Globerson, K.~Saenko, M.~Hardt, and S.~Levine, Eds., vol.~36.\hskip 1em plus 0.5em minus 0.4em\relax Curran Associates, Inc., 2023, pp. 20\,577--20\,612.

\bibitem{swerskyMultiTaskBayesianOptimization2013}
K.~Swersky, J.~Snoek, and R.~P. Adams, ``Multi-task bayesian optimization,'' in \emph{Advances in Neural Information Processing Systems}, vol.~26.\hskip 1em plus 0.5em minus 0.4em\relax Curran Associates, Inc., 2013.

\bibitem{toscano-palmerinBayesianOptimizationExpensive2022}
S.~{Toscano-Palmerin} and P.~I. Frazier, ``Bayesian optimization with expensive integrands,'' \emph{SIAM Journal on Optimization}, vol.~32, no.~2, pp. 417--444, Jun. 2022.

\bibitem{katharopoulosNotAllSamples2018}
A.~Katharopoulos and F.~Fleuret, ``Not all samples are created equal: Deep learning with importance sampling,'' in \emph{Proceedings of the 35th International Conference on Machine Learning}.\hskip 1em plus 0.5em minus 0.4em\relax PMLR, Jul. 2018, pp. 2525--2534.

\bibitem{elhanchiAdaptiveImportanceSampling2020}
A.~El~Hanchi and D.~Stephens, ``Adaptive importance sampling for finite-sum optimization and sampling with decreasing step-sizes,'' in \emph{Advances in Neural Information Processing Systems}, vol.~33.\hskip 1em plus 0.5em minus 0.4em\relax Curran Associates, Inc., 2020, pp. 15\,702--15\,713.

\bibitem{gowerSGDGeneralAnalysis2019}
R.~M. Gower, N.~Loizou, X.~Qian, A.~Sailanbayev, E.~Shulgin, and P.~Richt{\'a}rik, ``Sgd: General analysis and improved rates,'' in \emph{Proceedings of the 36th International Conference on Machine Learning}.\hskip 1em plus 0.5em minus 0.4em\relax PMLR, May 2019, pp. 5200--5209.

\bibitem{hanchiStochasticReweightedGradient2022}
A.~E. Hanchi, D.~Stephens, and C.~Maddison, ``Stochastic reweighted gradient descent,'' in \emph{Proceedings of the 39th International Conference on Machine Learning}.\hskip 1em plus 0.5em minus 0.4em\relax PMLR, Jun. 2022, pp. 8359--8374.

\bibitem{chenHeteRSGDTacklingHeterogeneous2023a}
Z.~Chen, J.~Lu, H.~Qian, X.~Wang, and W.~Yin, ``Hetersgd: Tackling heterogeneous sampling costs via optimal reweighted stochastic gradient descent,'' in \emph{Proceedings of The 26th International Conference on Artificial Intelligence and Statistics}.\hskip 1em plus 0.5em minus 0.4em\relax PMLR, Apr. 2023, pp. 10\,732--10\,781.

\bibitem{slivkinsIntroductionMultiArmedBandits2019}
A.~Slivkins, ``Introduction to multi-armed bandits,'' \emph{Foundations and Trends{\textregistered} in Machine Learning}, vol.~12, no. 1-2, pp. 1--286, Nov. 2019.

\bibitem{demidovichGuideZooBiased2023}
Y.~Demidovich, G.~Malinovsky, I.~Sokolov, and P.~Richtarik, ``A guide through the zoo of biased sgd,'' in \emph{Advances in Neural Information Processing Systems}, A.~Oh, T.~Naumann, A.~Globerson, K.~Saenko, M.~Hardt, and S.~Levine, Eds., vol.~36.\hskip 1em plus 0.5em minus 0.4em\relax Curran Associates, Inc., 2023, pp. 23\,158--23\,171.

\bibitem{sharmaSkewnessKurtosisNewtons2015}
R.~Sharma and R.~Bhandari, ``Skewness, kurtosis and newton's inequality,'' \emph{Rocky Mountain Journal of Mathematics}, vol.~45, no.~5, Oct. 2015.

\end{thebibliography}

\newpage
\onecolumn

\appendix

\subsection{Proof of Main Results}

\noindent \textbf{Proof of Theorem \ref{thm: OGQ}}
\begin{proof} 
We first prove the convergence result for SGD under given assumptions. The proof follows closely from the classical analysis for SGD, with subtle notational differences. 

By Assumption \ref{assumption: f_i smooth} and by the definition of expected improvement, we obtain that
\begin{equation}
\label{eq: SGD analysis}
\begin{aligned}
    f(x_t) &\ge \E [f(x_{t+1})| x_t] +  \E_i \left[\text{EI}_i(x_t) \right] \\
    &= \E\left[f(x_{t+1})|x_t\right ] + \alpha \norm{\nabla f (x_t)}^2 - \frac{\alpha^2 L}{2}\E_i \left[\norm{\nabla f_i(x_t)}^2\right],
\end{aligned}
\end{equation}
where $\E_i [\cdot]$ denotes that we are taking expectation with respect to index $i$.

Then, by Assumption \ref{assumption: f PL-condition} we have 
$$
\begin{aligned}
    f(x_t) &\ge \E\left[f(x_{t+1})| x_t\right ] + \alpha \norm{\nabla f (x_t)}^2 - \frac{\alpha^2 L}{2}\E_i \left[\norm{\nabla f_i(x_t)}^2\right] \\
    &\ge \E\left[f(x_{t+1})|x_t\right ] + \alpha \norm{\nabla f (x_t)}^2 - \left (2\alpha^2LL_{\max}(f(x_t) - \inf f) + \alpha^2L\sigma_f^* \right) \\
    &\ge \E\left[f(x_{t+1})|x_t\right ] + \alpha (2\mu - 2\alpha LL_{\max})(f(x_t) - \inf f) - \alpha^2L\sigma_f^* \\
    &\ge \E\left[f(x_{t+1})| x_t\right ] + \alpha \mu (f(x_t) - \inf f) - \alpha^2L\sigma_f^* 
\end{aligned}
$$
where the second inequality use the well known variance transfer lemma:
$$ \E_i \left[\norm{\nabla f_i(x)}^2 \right] \le 4L_{\max} (f(x) - \inf f) + 2\sigma_f^*,$$
and the third inequality uses Assumption \ref{assumption: f PL-condition}.
Then, for the last inequality, we used our stepsize assumption $\alpha \le \frac{\mu}{2L L_{\max}}$, so that $2\mu - 2\alpha LL_{\max} \ge \mu$.

Then we subtract $\inf f$ on both side and re-organize the equation. Finally by take expectation on $x_t$ we obtain
$$
\E[f(x_{t+1}) - \inf f] \le (1 - \alpha \mu) \E[f(x_t) - \inf f] + \alpha^2 L \sigma_f^*.
$$

By recursively applying the above inequality and telescoping them together, we have
$$
\E[f(x_{t}) - \inf f] \le (1 - \alpha \mu)^t \E[f(x_0) - \inf f] +  \frac{\alpha L}{\mu} \sigma_f^*,
$$
where $\sum_{i=0}^{t-1} (1- \alpha \mu) \le \frac{1}{\alpha \mu}$ is used to simplify the geometric sum.

Next, we prove the convergence result for the OGQ algorithm. The difference between SGD and Ideal algorithm is that, the Ideal algorithm always maximize $\text{EI}_i(x_t)$ at each step, instead of receiving the averaged improvement. Thus similar to \eqref{eq: SGD analysis}, for OGQ algorithm, we have
$$
\begin{aligned}
f(x_t) - f(x_{t+1})&\ge \max_i \text{EI}_i(x_t) \\
&= \E_i[\text{EI}_i(x_t)] + \left(\max_i \text{EI}_i(x_t) - \E_i[\text{EI}_i(x_t)] \right)  \\
\text{[Result from \eqref{eq: SGD analysis}]} \quad & \ge \alpha \norm{\nabla f (x_t)}^2 - \frac{\alpha^2 L}{2}\E_i \left[\norm{\nabla f_i(x_t)}^2\right] + \left(\max_i \text{EI}_i(x_t) - \E_i[\text{EI}_i(x_t)] \right) \\
\text{[By Corollary \ref{lemma: EI difference}]} \quad & \ge \alpha \norm{\nabla f (x_t)}^2 - \frac{\alpha^2 L}{2}\E_i \left[\norm{\nabla f_i(x_t)}^2\right] + \sqrt{\frac{\mathrm{Var}[\{\text{EI}_i(x_t)\}_{i=1}^n]}{c}} \\
\text{[By Assumption \ref{assumption: Hete}]} \quad & \ge \alpha \norm{\nabla f (x_t)}^2 - \frac{\alpha^2 L}{2}\E_i \left[\norm{\nabla f_i(x_t)}^2\right] + \sqrt{\frac{C_1 \alpha^2 \norm{\nabla f(x_t)}^4 + C_2 \alpha^4 L^2 \cdot \E_i \left [\norm{\nabla f_i(x_t)}^2 \right ]^2}{c}} \\
& \ge \alpha \norm{\nabla f (x_t)}^2 - \frac{\alpha^2 L}{2}\E_i \left[\norm{\nabla f_i(x_t)}^2\right] + \sqrt{\frac{C_1}{2c}}\alpha \norm{\nabla f(x_t)}^2 + \sqrt{\frac{C_2}{2c}} \alpha^2 L \E_i \left [\norm{\nabla f_i(x_t)}^2 \right ] \\
&= \left(1 + \sqrt{\frac{C_1}{2c}} \right ) \alpha \norm{\nabla f(x_t)}^2 - \frac{\alpha^2 L}{2} \left( 
         1 - \sqrt{\frac{2C_2 }{c}} \right ) \E_i\left[\norm{\nabla f_i(x_t)}^2\right] \\
&\ge \left(1 + \sqrt{\frac{C_1}{2c}} \right ) \alpha \cdot 2\mu(f(x_t) - \inf f) - \frac{\alpha^2 L}{2} \left( 
         1 - \sqrt{\frac{2C_2 }{c}} \right ) (4L_{\max} (f(x_t) - \inf f) + 2\sigma_f^*),
\end{aligned}
$$
where the last inequality uses a well-known result called variance transfer lemma, which is stated and proved in Lemma \ref{lemma: variance_transfer}.

Then by subtracting $\inf f$ on both side and re-organizing the equation, we have
$$
\begin{aligned}
f(x_{t+1}) - \inf f &\leq \left[ 1 - \left( 1 + \sqrt{\frac{C_1}{2 c}} \right) \alpha 2 \mu + 2 \alpha^2 L L_{\max} \left( 1 - \sqrt{\frac{2 C_2}{c}} \right) \right] (f(x_t) - \inf f) + \alpha^2 L \left( 1 - \sqrt{\frac{2 C_2}{c}} \right) \sigma_f^* \\
&= \left[ 1 - \left( 2 + \sqrt{\frac{2C_1}{c}} - \alpha \frac{2L L_{\max}}{\mu} \left( 1 -  \sqrt{\frac{2 C_2}{c}} \right)  \right)\alpha \mu \right] (f(x) - \inf f) + \alpha^2 L \left( 1 - \sqrt{\frac{2 C_2}{c}} \right) \sigma_f^* \\
\end{aligned}
$$
By using our stepsize assumption $\alpha \le \frac{\mu}{2 L L_{\max}}$, we obtain that 
$$
f(x_{t+1}) - \inf f \le \left[ 1 - \left( 1 + \frac{\sqrt{2}(\sqrt{C_1} + \sqrt{C_2})}{\sqrt{c}} \right) \alpha \mu\right] (f(x) - \inf f) + \alpha^2 L \left( 1 - \sqrt{\frac{2 C_2}{c}} \right) \sigma_f^* 
$$

Then by recursively applying all the inequality from 1 to $t$, we have
$$
\begin{aligned}
f(x_t) - \inf f &\leq \left[ 1 - \left( 1 + \frac{\sqrt{2}(\sqrt{C_1} + \sqrt{C_2})}{\sqrt{c}} \right) \alpha \mu \right]^t (f(x_0) - \inf f) + \alpha^2 L \cdot \frac{\left( 1 - \sqrt{\frac{2 C_2}{c}} \right) }{1 + \frac{\sqrt{2}(\sqrt{C_1} + \sqrt{C_2})}{\sqrt{c}}}\cdot \frac{\sigma_f^*}{\alpha \mu}\\
&= \left[ 1 - \left( 1 + \frac{\sqrt{2}(\sqrt{C_1} + \sqrt{C_2})}{\sqrt{c}} \right) \alpha \mu \right]^t (f(x_0) - \inf f) + \alpha \frac{L}{\mu} \cdot \frac{\sqrt{c/2 } - \sqrt{C_2} }{\sqrt{c/2 } + (\sqrt{C_1} + \sqrt{C_2})}\cdot \sigma_f^*.\\
\end{aligned}
$$

This matches with the result in Equation \eqref{eq: Ideal_bound} in Theorem \ref{thm: OGQ}.
\end{proof}

\noindent \textbf{Proof of Lemma \ref{lemma: r_i}}
\begin{proof} 
From the definition of $\text{EI}_i$ and $\tilde{\text{EI}_i}$, we have
    $$ 
\begin{aligned}|\tilde{\text{EI}_i}(x_t) &- \text{EI}_i(x_t) | = \left | (\alpha_t \langle \nabla \tilde{f}^t, \nabla \tilde{f}^t_i\rangle - \alpha_t \langle \nabla f(x_t), \nabla f_i(x_t) \rangle)- \frac{\alpha_t^2 L}{2} (\norm{\nabla \tilde{f}^t_i}^2 - \norm{\nabla f_i(x_t)}^2 )  \right| \\
&\le \left | \alpha_t \langle \nabla \tilde{f}^t, \nabla \tilde{f}_i^t\rangle - \alpha_t \langle \nabla f(x_t), \nabla f_i(x_t) \rangle \right| +  \frac{\alpha_t^2 L}{2} \left | \norm{\nabla \tilde{f}_i^t}^2 - \norm{\nabla f_i(x_t)}^2 \right| \\
&= \left | \alpha_t \langle \nabla \tilde{f}^t - \nabla f(x_t), \nabla \tilde{f}_i^t \rangle - \alpha_t \langle \nabla f(x_t), \nabla f_i(x_t) -\nabla \tilde{f}_i^t\rangle \right| +  \frac{\alpha_t^2 L}{2} \left | \norm{\nabla \tilde{f}_i^t}^2 - \norm{\nabla f_i(x_t)}^2 \right| \\
&\le \alpha_t \norm{\nabla \tilde{f}_i^t} \cdot \norm{\nabla \tilde{f}^t - \nabla f(x_t)}  + \alpha_t \left ( \norm{\nabla \tilde{f}^t} +\norm{\nabla \tilde{f}^t - \nabla f(x_t)} \right ) \cdot \norm{\nabla f_i(x_t) - \nabla \tilde{f}_i^t }  \\
&\quad + \alpha_t\norm{\nabla \tilde{f}_i^t - \nabla f_i(x_t)} + \frac{\alpha_t^2 L}{2} \norm{\nabla \tilde{f}_i^t + \nabla f_i(x_t)} \norm{\nabla \tilde{f}_i^t - \nabla f_i(x_t)} \\
&\le \alpha_t \norm{\nabla \tilde{f}_i^t} \cdot \norm{\nabla \tilde{f}^t - \nabla f(x_t)} + \alpha_t \norm{\nabla \tilde{f}^t} \cdot \norm{\nabla \tilde{f}_i^t - \nabla f_i(x_t)}  \\
&\quad + \alpha_t\norm{\nabla \tilde{f}^t - \nabla f(x_t)}\norm{\nabla \tilde{f}_i^t - \nabla f_i(x_t)} + \frac{\alpha_t^2 L}{2} \left(2\norm{\nabla \tilde{f}_i^t} + \norm{\nabla \tilde{f}_i^t - \nabla f_i(x_t)} \right)\norm{\nabla \tilde{f}_i^t - \nabla f_i(x_t)} \\
&\le \alpha_t \norm{\nabla \tilde{f}_i^t} \cdot \overline{\epsilon}_t + (\alpha_t \norm{\nabla \tilde{f}^t} + \alpha_t^2 L \norm{\nabla \tilde{f}^t_i}) \epsilon_t^{(i)} + \alpha_t \epsilon_t^{(i)} \overline{\epsilon}_t + \frac{\alpha_t^2 L}{2} (\epsilon_t^{(i)})^2 \quad =: r_i^t
\end{aligned}
$$
where the first and third inequality uses the triangle inequality, while the second inequality is from the Cauchy-Schwartz inequality. The last inequality is a directly applying the $L_i$-smoothness for each $f_i$.
\end{proof}

\noindent \textbf{Proof of Theorem \ref{thm: SGQ}}

To prove the convergence result in Theorem \ref{thm: SGQ}, we first q the following lemma and proposition that relates $x_{t-k}$ to $x_t$. 

    \begin{lemma}
    \label{lemma: x_(t-k) - x_t}
    For any gradient-based method with update rule as $x_{t+1} = x_{t} - \alpha \nabla f_{i_t} (x_t)$ that solves problem \eqref{eq: finite sum}. Let Assumption \ref{assumption: f_i smooth} holds for $f_i$'s. If the stepsize is chosen $\alpha < \frac{1}{L_{\max}}$, we have for $t\ge1$,

    $$ \frac{1}{\alpha}\norm{x_{t-1} - x} = \norm{\nabla f_{i_{t-1}} (x_{t-1})} \le \frac{1}{1 - \alpha L_{i_{t-1}}} \norm{\nabla f_{i_{t-1}} (x_t)}, $$
    and for arbitrary $i \in [[1,n]]$, we have
    $$ \norm{\nabla f_{i} (x_{t-1})} \le \norm{\nabla f_{i} (x_t)} + \frac{\alpha L_i}{1 - \alpha L_{i_{t-1}}} \norm{\nabla f_{i_{t-1}} (x_t)}.
    $$
\end{lemma}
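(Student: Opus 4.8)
The plan is to derive both inequalities directly from the update rule together with the componentwise $L_i$-smoothness from Assumption~\ref{assumption: f_i smooth}, using the triangle inequality ``in reverse'' so that a gradient evaluated at $x_{t-1}$ is bounded in terms of gradients evaluated at $x_t$. No fixed-point or recursive argument is needed at this stage; everything is a one-step estimate.

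First I would handle the identity and the first bound. The update rule $x_t = x_{t-1} - \alpha \nabla f_{i_{t-1}}(x_{t-1})$ gives $x_{t-1} - x_t = \alpha \nabla f_{i_{t-1}}(x_{t-1})$, hence $\tfrac{1}{\alpha}\norm{x_{t-1}-x_t} = \norm{\nabla f_{i_{t-1}}(x_{t-1})}$, which is the stated equality. For the inequality, apply $L_{i_{t-1}}$-smoothness of $f_{i_{t-1}}$ to the pair $(x_{t-1},x_t)$ to get $\norm{\nabla f_{i_{t-1}}(x_{t-1}) - \nabla f_{i_{t-1}}(x_t)} \le L_{i_{t-1}}\norm{x_{t-1}-x_t} = \alpha L_{i_{t-1}}\norm{\nabla f_{i_{t-1}}(x_{t-1})}$. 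Then $\norm{\nabla f_{i_{t-1}}(x_{t-1})} \le \norm{\nabla f_{i_{t-1}}(x_t)} + \norm{\nabla f_{i_{t-1}}(x_{t-1}) - \nabla f_{i_{t-1}}(x_t)} \le \norm{\nabla f_{i_{t-1}}(x_t)} + \alpha L_{i_{t-1}}\norm{\nabla f_{i_{t-1}}(x_{t-1})}$, so $(1-\alpha L_{i_{t-1}})\norm{\nabla f_{i_{t-1}}(x_{t-1})} \le \norm{\nabla f_{i_{t-1}}(x_t)}$. Since $\alpha < 1/L_{\max} \le 1/L_{i_{t-1}}$, the factor $1-\alpha L_{i_{t-1}}$ is strictly positive, and dividing through gives the first claim.

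For the second bound, with $i$ arbitrary, apply $L_i$-smoothness of $f_i$ across the same step: $\norm{\nabla f_i(x_{t-1}) - \nabla f_i(x_t)} \le L_i\norm{x_{t-1}-x_t} = \alpha L_i \norm{\nabla f_{i_{t-1}}(x_{t-1})}$. A triangle inequality then yields $\norm{\nabla f_i(x_{t-1})} \le \norm{\nabla f_i(x_t)} + \alpha L_i \norm{\nabla f_{i_{t-1}}(x_{t-1})}$, and substituting the bound on $\norm{\nabla f_{i_{t-1}}(x_{t-1})}$ obtained in the previous step produces exactly $\norm{\nabla f_i(x_{t-1})} \le \norm{\nabla f_i(x_t)} + \tfrac{\alpha L_i}{1-\alpha L_{i_{t-1}}}\norm{\nabla f_{i_{t-1}}(x_t)}$.

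There is no real obstacle in this lemma; the only delicate point is confirming that $1-\alpha L_{i_{t-1}} > 0$, which is precisely where the stepsize restriction $\alpha < 1/L_{\max}$ enters. The value of the lemma is downstream: iterating the second inequality lets one bound $\norm{x_{\tau_i^t} - x_t}$ and hence the error terms $r_{i_k}^t$, which is the mechanism for controlling the cumulative regret $\sum_{k} 2 r_{i_k}^t$ against OGQ in the proof of Theorem~\ref{thm: SGQ}.
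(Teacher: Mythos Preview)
Your proposal is correct and follows essentially the same route as the paper's proof: triangle inequality plus $L_{i_{t-1}}$-smoothness to isolate $(1-\alpha L_{i_{t-1}})\norm{\nabla f_{i_{t-1}}(x_{t-1})}$, then the same triangle-inequality step with $L_i$-smoothness and substitution of the first bound for the second claim. The only addition you make is explicitly noting that $\alpha < 1/L_{\max}$ ensures $1-\alpha L_{i_{t-1}}>0$, which the paper uses implicitly.
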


\begin{proof}
    By triangle inequality, we have
    $$
    \begin{aligned} 
    \norm{\nabla f_{i_{t-1}} (x_{t-1})} &\le \norm{\nabla f_{i_{t-1}} (x_{t})} + \norm{\nabla f_{i_{t-1}} (x_{t-1}) - \nabla f_{i_{t-1}} (x_{t})} \\
    &\le \norm{\nabla f_{i_{t-1}} (x_{t})} + L_{i_{t-1}} \norm{x_{t-1} - x_t} \\
    &= \norm{\nabla f_{i_{t-1}} (x_{t})} + L_{i_{t-1}} \norm{\alpha \nabla f_{i_{t-1}}(x_{t-1})},
    \end{aligned}
    $$
    where the last equality uses the updating rule $x_{t+1} = x_{t} - \alpha \nabla f_{i_t} (x_t)$. 
    
    Thus we have $(1 - \alpha L_{i_{t-1}}) \norm{\nabla f_{i_{t-1}} (x_{t-1})} \le \norm{\nabla f_{i_{t-1}} (x_{t})}$. Divide both sides by $(1 - \alpha L_{i_{t-1}})$, we prove the first argument.

Additionally, as a simple extension, we have that for arbitrary index $i$:
$$ 
\begin{aligned}
\norm{\nabla f_{i} (x_{t-1})} &\le \norm{\nabla f_{i} (x_t)} + \norm{\nabla f_i (x_{t-1}) - \nabla f_{i} (x_{t})} \\
&\le \norm{\nabla f_{i} (x_t)} + L_i \norm{x_{t-1} - x_t}\\
&= \norm{\nabla f_{i} (x_t)} + \alpha L_i \norm{\nabla f_{i_{t-1}} (x_{t-1})} \\
&\le \norm{\nabla f_{i} (x_t)} + \frac{\alpha L_i}{1 - \alpha L_{i_{t-1}}} \norm{\nabla f_{i_{t-1}} (x_t)}, \\
\end{aligned}
$$
which proves the second argument.
\end{proof}

\begin{corollary}
\label{coro: x_(t-k) - x_t}
Denote $B_k = \norm{x_{t-k} - x_t}$, then under the assumption of Lemma \ref{lemma: x_(t-k) - x_t}, then we have the following estimation on $B_k$:

$$
\begin{aligned}
B_k = \norm{x_{t-k} - x_t} &\le \frac{\alpha}{1-\alpha L_{\max }}\left(\frac{1}{1-\alpha L_{\max }}\right)^{k-1} \cdot \sum_{l=1}^k \norm{\nabla f_{i_{t-l}}(x_t)}
\end{aligned}
$$
\end{corollary}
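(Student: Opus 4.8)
\textbf{Proof proposal for Corollary \ref{coro: x_(t-k) - x_t}.}

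The plan is to iterate the second inequality of Lemma \ref{lemma: x_(t-k) - x_t}, which controls any single-step displacement in terms of gradient norms evaluated at the \emph{later} iterate, and then telescope over the $k$ steps separating $x_{t-k}$ from $x_t$. First I would write $x_{t-k} - x_t = -\sum_{l=1}^k (x_{t-l+1} - x_{t-l}) = \alpha \sum_{l=1}^k \nabla f_{i_{t-l}}(x_{t-l})$, so that by the triangle inequality $B_k \le \alpha \sum_{l=1}^k \norm{\nabla f_{i_{t-l}}(x_{t-l})}$. The task is thus to bound each $\norm{\nabla f_{i_{t-l}}(x_{t-l})}$ by gradient norms at the reference point $x_t$.

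The key observation is that Lemma \ref{lemma: x_(t-k) - x_t} (second inequality) lets us "transport" a gradient norm at $x_{s-1}$ to $x_s$ at the cost of an additive term involving $\norm{\nabla f_{i_{s-1}}(x_s)}$ and a multiplicative factor $\frac{\alpha L_i}{1 - \alpha L_{i_{s-1}}} \le \frac{\alpha L_{\max}}{1-\alpha L_{\max}}$, while the first inequality transports $\norm{\nabla f_{i_{s-1}}(x_{s-1})}$ to $\norm{\nabla f_{i_{s-1}}(x_s)}$ with factor $\frac{1}{1-\alpha L_{\max}}$. Applying these repeatedly, a gradient evaluated at $x_{t-l}$ gets pushed forward $l$ steps to $x_t$, picking up a factor of at most $\left(\frac{1}{1-\alpha L_{\max}}\right)$ per step and generating a sum of terms $\norm{\nabla f_{i_{t-j}}(x_t)}$ for $j \le l$. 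Summing the geometric contributions over all $l$ from $1$ to $k$, every $\norm{\nabla f_{i_{t-l}}(x_t)}$ appears with total weight at most $\frac{\alpha}{1-\alpha L_{\max}}\left(\frac{1}{1-\alpha L_{\max}}\right)^{k-1}$, which yields the claimed bound. Concretely I would set up an induction on $k$: assume the estimate for $B_{k-1}$ (and for the auxiliary quantity $\norm{\nabla f_i(x_{t-k+1})}$ in terms of $\{\norm{\nabla f_{i_{t-l}}(x_t)}\}_{l<k}$), then add the single new step using Lemma \ref{lemma: x_(t-k) - x_t} and absorb the new factor.

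The main obstacle I anticipate is bookkeeping the accumulation of the geometric factors cleanly — making sure the per-step factor $\frac{1}{1-\alpha L_{\max}}$ and the leading $\frac{\alpha}{1-\alpha L_{\max}}$ combine to give exactly the stated $\frac{\alpha}{1-\alpha L_{\max}}\left(\frac{1}{1-\alpha L_{\max}}\right)^{k-1}$ coefficient, rather than something slightly larger. The cleanest route is probably to prove by induction the slightly stronger statement that $\norm{\nabla f_i(x_{t-k})} \le \norm{\nabla f_i(x_t)} + \frac{\alpha L_{\max}}{1-\alpha L_{\max}}\sum_{l=1}^{k-1}\left(\frac{1}{1-\alpha L_{\max}}\right)^{l-1}\norm{\nabla f_{i_{t-l}}(x_t)} + \left(\frac{1}{1-\alpha L_{\max}}\right)^{k-1}$ -- wait, this needs care since the index $i$ in the auxiliary bound should specialize to $i_{t-k}$ to feed back into the displacement sum. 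So I would instead carry two coupled inductive hypotheses: one for $B_k$ and one for the forward-transported gradient norm $\norm{\nabla f_{i_{t-k}}(x_{t-k})}$, using that $\alpha < \frac{1}{L_{\max}}$ guarantees $1 - \alpha L_{\max} > 0$ so all denominators are positive and the geometric series behaves monotonically. Everything else is routine once this recursion is in place.
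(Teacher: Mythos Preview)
Your plan is correct and would yield the stated bound, but it takes a more laborious route than the paper. You propose to write $B_k \le \alpha\sum_{l=1}^k \norm{\nabla f_{i_{t-l}}(x_{t-l})}$ and then push each summand forward \emph{one iterate at a time} using the second inequality of Lemma~\ref{lemma: x_(t-k) - x_t}, accumulating geometric factors along the way; as you anticipate, this forces you to track coupled inductive hypotheses and sum a triangular array of contributions. The paper instead notices that smoothness lets you jump \emph{directly} from $x_{t-k}$ to $x_t$: from $B_k \le B_{k-1} + \alpha\norm{\nabla f_{i_{t-k}}(x_{t-k})}$ and $\norm{\nabla f_{i_{t-k}}(x_{t-k})} \le \norm{\nabla f_{i_{t-k}}(x_t)} + L_{i_{t-k}} B_k$, one obtains the self-referential inequality $(1-\alpha L_{i_{t-k}})B_k \le B_{k-1} + \alpha\norm{\nabla f_{i_{t-k}}(x_t)}$, i.e.\ a clean scalar recursion $B_k \le \frac{1}{1-\alpha L_{\max}}B_{k-1} + \frac{\alpha}{1-\alpha L_{\max}}\norm{\nabla f_{i_{t-k}}(x_t)}$ that unrolls immediately. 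Both arguments produce the same intermediate bound $B_k \le \alpha\sum_{l=1}^k (1-\alpha L_{\max})^{-(k-l+1)}\norm{\nabla f_{i_{t-l}}(x_t)}$ before coarsening, but the paper's self-referential trick eliminates precisely the bookkeeping you flagged as the main obstacle, and it never needs the second inequality of Lemma~\ref{lemma: x_(t-k) - x_t} at all.
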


\begin{proof}
The proof is done by recursively using the result in Lemma \ref{lemma: x_(t-k) - x_t}. First, we have
$$
\begin{aligned}
    B_k &= \norm{x_{t-k} - x_t} = \alpha \norm{\nabla f_{i_{t-1}} (x_{t-1}) + ... + \nabla f_{i_{t-k}} (x_{t-k})} \\
    & \le B_{k-1} + \alpha \norm{\nabla f_{i_{t-k}} (x_{t-k})} \\
    & \le B_{k-1} + \alpha \norm{\nabla f_{i_{t-k}} (x_t)} + \alpha \norm{\nabla f_{i_{t-k}} (x_t) - \nabla f_{i_{t-k}} (x_{t-k})} \\
    & \le B_{k-1} + \alpha \norm{\nabla f_{i_{t-k}} (x_t)} + \alpha L_{i_{t-k}}B_{k} 
\end{aligned}
$$
Thus we have the following iteration rule:
$$B_k \le \frac{1}{1 - \alpha L_{i_{t-k}}} B_{k-1} + \frac{\alpha}{1 - \alpha L_{i_{t-k}}} \norm{\nabla f_{i_{t-k}} (x_t)}. $$

Recursively using the above relationship, we will have
$$
\begin{aligned}
B_k = \norm{x_{t-k} - x_t} &\le \frac{\alpha}{1 - \alpha L_{i_{t-k}}} \norm{\nabla f_{i_{t-k}} (x_t)} + \frac{\alpha}{1 - \alpha L_{i_{t-k+1}}}\left( \frac{1}{1 - \alpha L_{i_{t-k+1}}} \right) \norm{\nabla f_{i_{t-k+1}} (x_t)} + ... \\
&+ \frac{\alpha}{1 - \alpha L_{i_{t-1}}}  \prod_l (\frac{1}{1 - \alpha L_{i_{t-l}}}) \norm{\nabla f_{i_{t-1}} (x_t)} \\
&\le \frac{\alpha}{1-\alpha L_{\max }}\left(\frac{1}{1-\alpha L_{\max }}\right)^{k-1} \cdot \sum_{l=1}^k \norm{\nabla f_{i_{t-l}}(x_t)}
\end{aligned}
$$
\end{proof}

Now we are ready to prove Theorem \ref{thm: SGQ}.
\begin{proof}

Since at each time step, the SGQ algorithm will do a random exploration with probability $p$, the analysis of the SGQ algorithm can be separated into two parts:

1. If at time $t$, the random exploration is triggered, then the per-step increase is exactly the same as SGD:
$$ 
\begin{aligned}
f(x_t) - \E[f(x_{t+1})|x_t] &\ge \alpha \norm{\nabla f(x_t)}^2 - \frac{\alpha^2 L}{2} \E_i\left[\norm{\nabla f_i(x_t)}^2\right] 
\end{aligned}
$$

2. Otherwise, the algorithm will choose to iterate with a UCB-type approach. Then, through the regret analysis compared with Ideal algorithm, the per-step increase can be written as

$$ 
\begin{aligned}
f(x_t) - f(x_{t+1}) &\ge \left(1 + \sqrt{\frac{C_1}{2c}} \right ) \alpha \norm{\nabla f(x_t)}^2 - \frac{\alpha^2 L}{2} \left( 1 - \sqrt{\frac{2C_2 }{c}} \right ) \E_i\left[\norm{\nabla f_i(x_t)}^2\right] - 2 r^t_{i_t},
\end{aligned}
$$
where $r^t_i$ denotes the worst-case bound of index $i$ at time $t$, with the definition given in Algorithm \ref{alg: proposed} as:
$$r^t_i = \alpha \norm{\nabla \tilde{f}_i} \overline{\epsilon}_t + \alpha \norm{\nabla \tilde{f}}\epsilon_t^{(i)} + \alpha^2 L \norm{\nabla \tilde{f}_i}\epsilon_t^{(i)}  + \alpha \epsilon_t^{(i)} \overline{\epsilon}_t + \frac{\alpha^2 L}{2} (\epsilon_t^{(i)})^2. $$

Thus by taking expectation on the random exploration variable $\xi_t$, we obtain that the expected per-step increase is of the following:
$$ 
\begin{aligned}
\E[f(x_t) - & f(x_{t+1})|x_t] =
p\cdot \E \Big[f(x_t) - f(x_{t+1})\Big | x_t, \xi_t \le p \Big ] + (1-p)\cdot \E \Big[f(x_t) - f(x_{t+1})\Big | x_t, \xi_t > p \Big ]\\ 
& \ge \left(1 + (1-p)\sqrt{\frac{C_1}{2c}} \right ) \alpha \norm{\nabla f(x_t)}^2 - \frac{\alpha^2 L}{2} \left( 1 - (1-p)\sqrt{\frac{2C_2 }{c}} \right ) \E_i\left[\norm{\nabla f_i(x_t)}^2\right] - 2(1-p)r^t_{i_t},
\end{aligned}
$$

It suffices to provide an bound for $\E[r_i^t]$. Firstly, we notice that
$$
\begin{aligned}
& \norm{\nabla \tilde{f}^t_i}=\norm{\nabla f_i(x_{\tau_i^t})}\leq \norm{\nabla f_i (x_t)}+\norm{\nabla f_i (x_{\tau_i^t})-\nabla f_i(x_t)} \\
& \leq \norm{\nabla f_i(x_t)} +L_i\norm{x_{\tau_i^t}-x_t}=\norm{\nabla f_i(x_t)}+\epsilon_t^{(i)}.
\end{aligned}
$$

Similarly we also have $\norm{\nabla \tilde{f}^t} = \norm{\nabla f(x_t)} + \overline{\epsilon}_t$. Thus,

\begin{equation}
\label{eq: r_i estimate}
\begin{aligned}
r_i^t &= \alpha\norm{\nabla \tilde{f}^t_i} \cdot \overline{\epsilon}_t+\alpha\|\nabla \tilde{f}^t\| \epsilon_t^{(i)}+\alpha^2 L\norm{\nabla \tilde{f}^t_i } \epsilon_t^{(i)}+\alpha \overline{\epsilon}_t \epsilon_t^{(i)}+\frac{\alpha^2 L}{2}\left(\epsilon_t^{(i)}\right)^2 \\
& \leqslant \alpha\left(\norm{\nabla f_i}+\epsilon_t^{(i)}\right) \overline{\epsilon}_t+\alpha(\|\nabla f\|+\overline{\epsilon}_t) \epsilon_t^{(i)}+\alpha^2 L\left(\norm{\nabla f_i}+\epsilon_t^{(i)}\right) \epsilon_t^{(i)}+\alpha \overline{\epsilon}_t \epsilon_t^{(i)} +\frac{\alpha^2 L}{2}\left(\epsilon_t^{(i)}\right)^2 \\
&=\alpha\norm{\nabla f_i} \overline{\epsilon}_t+\alpha\|\nabla f\| \epsilon_t^{(i)}+\alpha^2 L\norm{\nabla f_i} \epsilon_t^{(i)}+3 \alpha \overline{\epsilon}_t \epsilon_t^{(i)}+\frac{3 \alpha^2 L}{2}\left(\epsilon_t^{(i)}\right)^2.
\end{aligned} 
\end{equation}

Now the question is transformed to bounding $\epsilon_t^{(i)}$ and $\overline{\epsilon}_t$. In the following, we define $\E_{\xi_t}[\cdot ]$ that represents taking expectation with respect to $\xi_0, \xi_1,...\xi_{t-1}$.
By the random exploration nature of our SGQ algorithm, we may write
\begin{equation}
\label{eq: epsilon_i}
\begin{aligned}
\E_{\xi_t}&\left[\left(\epsilon_t^{(i)}\right)^2\right]= \E_{\xi_t} \left[L_i^2\norm{x_{\tau_i^t}-x_t}^2\right]  \\
& \le L_i^2 \left(\frac{p}{n} \norm{x_{t-1}-x_t}^2+\frac{p}{n}\left(1-\frac{p}{n}\right)\norm{x_{t-2}-x_t}^2+\cdots \cdot \frac{p}{n}\left(1-\frac{p}{n}\right)^{t-2}\norm{x_1-x_t}^2 + \left(1-\frac{p}{n} \right)^{t-1} \norm{x_0 - x_t}^2 \right)\\
& = L_i^2 \cdot \frac{p}{n} \cdot \sum_{k=1}^{t-1}\left(1-\frac{p}{n}\right)^{k-1}\norm{x_{t-k}-x_t}^2+L_i^2\left(1-\frac{p}{n}\right)^{t-1} \cdot \norm{x_0-x_t}^2.
\end{aligned}
\end{equation}

Then it follows directly from Corollary \ref{coro: x_(t-k) - x_t} that
$$
\norm{x_{t-k}-x_t}=: B_k \le \frac{\alpha}{1-\alpha L_{\max }}\left(\frac{1}{1-\alpha L_{\max }}\right)^{k-1} \cdot \sum_{l=1}^k\norm{\nabla f_{i_{t-l}}(x_t)}.
$$

After plug the above bound for $\norm{x_{t-k} - x_t}$ back to \eqref{eq: epsilon_i}, we obtain
$$
\begin{aligned}
\E_{\xi_t}\left[\left(\epsilon_t^{(i)}\right)^2\right] &\le L_i^2 \cdot \frac{p}{n} \cdot \sum_{k=1}^{t-1}\left(\frac{\alpha}{1-\alpha L_{\max}}\right)^2\left(\frac{1-\frac{p}{n}}{\left(1-\alpha L_{\text {max }}\right)^2}\right)^{k-1} \left(\sum_{i=1}^k\norm{\nabla f_{i_{t-l}} (x_t)}\right)^2+L_i^2\left(1-\frac{p}{n}\right)^{t-1} \norm{x_0-x_t}^2 \\
& =\frac{\alpha^2 p L_i^2}{n \left(1-\alpha L_{\max}\right)^2} \sum_{k=1}^{t-1} q^{k-1}\left(\sum_{l=1}^k\norm{\nabla f_{i_{t-l}}(x_t)} \right)^2+ \frac{\alpha^2 L_i^2 \cdot q^{t-1}}{(1-\alpha L_{\max})^2} \left(\sum_{l=1}^k\norm{\nabla f_{i_{t-l}}(x_t)} \right)^2\\
\text { (Canchy-Schwartz) } &\le \frac{\alpha^2 p L_i^2}{n \left(1-\alpha L_{\text {max }}\right)^2} \sum_{k=1}^{t-1} q^{k-1} \cdot k \sum_{l=1}^k\norm{\nabla f_{i_{t-l}}(x_t)}^2 + \frac{\alpha^2 L_i^2 \cdot q^{t-1} t}{(1-\alpha L_{\max})^2} \sum_{l=1}^t\norm{\nabla f_{i_{t-l}}(x_t)}^2 \\
&= \frac{\alpha^2 p L_i^2}{n \left(1-\alpha L_{\max}\right)^2} \sum_{k=1}^{t-1} \sum_{l=1}^k k q^{k-1}  \norm{\nabla f_{i_{t-l}}(x_t)}^2 + \frac{\alpha^2 L_i^2 \cdot q^{t-1} t}{(1-\alpha L_{\max})^2} \sum_{l=1}^t\norm{\nabla f_{i_{t-l}}(x_t)}^2 \\
&= \frac{\alpha^2 p L_i^2}{n \left(1-\alpha L_{\max}\right)^2} \sum_{l=1}^t T_l  \norm{\nabla f_{i_{t-l}}(x_t)}^2 + \frac{\alpha^2 L_i^2 \cdot q^{t-1} t}{(1-\alpha L_{\max})^2} \sum_{l=1}^t\norm{\nabla f_{i_{t-l}}(x_t)}^2,
\end{aligned}
$$
where $q = \frac{1-\frac{p}{n}}{(1-\alpha L_{\max})^2}$. By using the stepsize assumption in Theorem \ref{thm: SGQ}, we have 
$$ q = \frac{1-\frac{p}{n}}{(1-\alpha L_{\max})^2} \le \frac{1-\frac{p}{n}}{(1- \frac{1 - \sqrt{1 - \frac{p}{2n}} }{L_{\max}}  L_{\max})^2} = \frac{1 - \frac{p}{n}}{1-\frac{p}{2n}} = 1 - \frac{p}{2n - p} < 1 -\frac{p}{2n} < 1,$$
and thus $T_l$ is validly defined as:
$$T_l = lq^{l-1} + ... +tq^{t-1} \le t \frac{q^{l-1} - q^t}{1-q} \le \frac{t}{1-q} q^{l-1}.$$

Hence we further have, 
$$
\begin{aligned}
\E_{\xi_t}\left[\left(\epsilon_t^{(i)}\right)^2\right] &\le \frac{\alpha^2 p L_i^2}{n \left(1-\alpha L_{\max}\right)^2}  \sum_{l=1}^{t}  \frac{t}{1-q} q^{l-1} \norm{\nabla f_{i_{t-l}}(x_t)}^2 + \frac{\alpha^2 L_i^2 \cdot q^{t-1} t}{(1-\alpha L_{\max})^2} \sum_{l=1}^t\norm{\nabla f_{i_{t-l}}(x_t)}^2\\
&\le \frac{\alpha^2 p L_i^2}{n \left(1-\alpha L_{\max}\right)^2}  \frac{t}{1-q} \sum_{l=1}^{t} q^{t-1} \norm{\nabla f_{i_{t-l}}(x_t)}^2 + \frac{\alpha^2 L_i^2 t}{(1-\alpha L_{\max})^2} \sum_{l=1}^t q^{l-1} \norm{\nabla f_{i_{t-l}}(x_t)}^2\\
& \le \left(\frac{p}{n(1-q)} + 1 \right) \frac{\alpha^2L_i^2 t}{(1-\alpha L_{\max})^2} \sum_{l=1}^t q^{l-1}  \norm{\nabla f_{i_{t-l}}(x_t)}^2.
\end{aligned}
$$

Similarly we can also provide a bound for $\E \left[\epsilon_t^{(i)} \overline{\epsilon}_t \right], \E \left[\epsilon_t^{(i)}\right]$, and $ \E \left[ \overline{\epsilon}_t\right]$. For $\E \left[\epsilon_t^{(i)} \overline{\epsilon}_t \right]$, we can write
$$
\begin{aligned}
\E_{\xi_t} \left[\epsilon_t^{(i)} \overline{\epsilon}_t \right] &= \frac{1}{n} \sum_{j=1}^n \E \left[\epsilon_t^{(i)}\epsilon_t^{(j)} \right] \le \frac{1}{n} \sum_{j=1}^n \sqrt{\E \left[(\epsilon_t^{(i)})^2 \right] \E \left[(\epsilon_t^{(j)})^2 \right]} \\
&= \left(\frac{p}{n(1-q)} + 1 \right) \frac{\alpha^2L_i L t}{(1-\alpha L_{\max})^2} \sum_{l=1}^t q^{l-1}  \norm{\nabla f_{i_{t-l}}(x_t)}^2.
\end{aligned}
$$

The calculation of $\E_{\xi_t} \left[\epsilon_t^{(i)} \right]$ and $\E_{\xi_t} \left[ \overline{\epsilon}_t \right]$ is similar but simpler. In fact,

$$ 
\E_{\xi_t} \left[\epsilon_t^{(i)}\right] = \left(\frac{p}{n(1-\tilde{q})} + 1 \right) \frac{\alpha L_i}{1-\alpha L_{\max}} \sum_{l=1}^t 
\tilde{q}^{l-1}  \norm{\nabla f_{i_{t-l}}(x_t)}
$$

$$ 
\E_{\xi_t} \left[ \overline{\epsilon}_t\right] = \frac{1}{n}\sum_{j=1}^n \E \left[ \epsilon_t^{(j)}  \right] = \left(\frac{p}{n(1-\tilde{q})} + 1 \right) \frac{\alpha L}{1-\alpha L_{\max}} \sum_{l=1}^t \tilde{q}^{l-1}  \norm{\nabla f_{i_{t-l}}(x_t)},
$$
where $\tilde{q} = \frac{1 -p/n}{1 -\alpha L_{\max}} \le q < 1$. 

We can now take expectation with respect to $x_t$ on equation \eqref{eq: r_i estimate}, and then plug those estimations on $\epsilon_t^{(i)}, \overline{\epsilon}_t$ back. We obtain that
$$
\begin{aligned}
\E_{\xi} [r_i^t] &\le \left( \frac{p}{n(1-\tilde{q})} + 1 \right) 
\frac{\alpha^2(L + L_i +\alpha L L_i)}{2(1 - \alpha L_{\max})} \sum_{l=1}^{t} \tilde{q}^{l-1} \norm{\nabla f_{i_{t-l}}(x_t)}^2  \\
& + \left( \frac{p}{n(1-q)} + 1 \right) 
\frac{3\alpha^3 L L_i (2 + \alpha L_i)t}{4(1 - \alpha L_{\max})^2} \sum_{l=1}^{t} q^{l-1} \norm{\nabla f_{i_{t-l}}(x_t)}^2 + \left( \frac{p}{n(1-\tilde{q})} + 1 \right) 
\frac{\alpha^2 L_i}{2(1 - \alpha L_{\max})} \sum_{l=1}^{t} \tilde{q}^{l-1} \norm{\nabla f(x_t)}^2 \\
& + \left( \frac{p}{n(1-\tilde{q})} + 1 \right) 
\frac{\alpha^2(L +\alpha L L_i)}{2(1 - \alpha L_{\max})} \sum_{l=1}^{t} \tilde{q}^{l-1} \norm{\nabla f_i(x_t)}^2 + 
\left( \frac{p}{n(1-q)} + 1 \right) 
\frac{3\alpha^3 L L_i (2 + \alpha L_i)t}{4(1 - \alpha L_{\max})^2} \sum_{l=1}^{t} q^{l-1} \norm{\nabla f_i(x_t)}^2.
\end{aligned}$$ 

Since we have $\tilde{q} \le q$ and Assumption \ref{assumption: Delta bound} holds. For any index $i$, we are allowed to bound
$$
\begin{aligned}
\sum_{l=1}^{t} \tilde{q}^{l-1}  \norm{\nabla f_{i_{t-l}}(x_t)}^2 &\le \sum_{l=1}^{t} q^{l-1}  \norm{\nabla f_{i_{t-l}}(x_t)}^2 = \sum_{l=1}^{t} q^{l-1}  \norm{\nabla f_{i_{t-l}}(x_t) - \nabla f(x_t) + \nabla f(x_t)}^2 \\
&\le 2\sum_{l=1}^{t} q^{l-1}  \left (\norm{\nabla f_{i_{t-l}}(x_t) - \nabla f(x_t)}^2 + \norm{\nabla f(x_t)}^2 \right) \\
\text{[By Assumption \ref{assumption: Delta bound}]}\quad &\le 2\sum_{l=1}^{t} q^{l-1}  \left (\Delta^2 + \norm{\nabla f(x_t)}^2 \right)\\
&\le \frac{2}{1 - q} \left (\Delta^2 + \norm{\nabla f(x_t)}^2 \right).
\end{aligned} 
$$

Then we can use the above inequality to relax the bound for $\E [r_i^t]$, in fact we have
$$
\begin{aligned}
\E_{\xi_t} [r_i^t] &\le \alpha^2 \left (\frac{p}{n(1-q)} + 1 \right ) \frac{L + L_{\max}}{1-\alpha L_{\max}} \frac{2}{1-q} \cdot (\Delta^2 + \norm{\nabla f(x_t)}^2 ) + \mathcal{O}(\alpha^3) \\
\text{[$\alpha < \frac{1}{2L_{\max}}$]} \quad &\le \alpha^2 \left (\frac{p}{n(1-q)} + 1 \right ) \frac{4(L + L_{\max})}{1-q} \cdot (\Delta^2 + \norm{\nabla f(x_t)}^2 ) + \mathcal{O}(\alpha^3) \\
\text{[$ q< 1 -\frac{p}{2n} $]} \quad & \le \alpha^2 \cdot \frac{24n(L + L_{\max})}{p} \cdot (\Delta^2 + \norm{\nabla f(x_t)}^2) + \mathcal{O}(\alpha^3).
\end{aligned}
$$

By replacing $\E_{\xi_t}[r_i^t]$ in the per-step analysis, we obtain that
$$
\begin{aligned}
\E[f(x_t) - f(x_{t+1})| x_t] &\ge \left(1 + (1-p)\sqrt{\frac{C_1}{2c}} \right ) \alpha \norm{\nabla f(x_t)}^2 - \frac{\alpha^2 L}{2} \left( 1 - (1-p)\sqrt{\frac{2C_2 }{c}} \right ) \E_i\left[\norm{\nabla f_i(x_t)}^2\right] + 2(1-p)r^t_{i_t} \\
& = \bigg[\left( 1 + (1-p)\sqrt{ \frac{C_1}{2 c}} \right)\alpha + \alpha^2 (1 - p) \frac{24n(L + L_{\max})}{p} \bigg] \cdot \norm{\nabla f(x_t)}^2 \\
& - \frac{\alpha^2 L}{2} \left( 1 - (1-p)\sqrt{\frac{2C_2 }{c}} \right ) \E_i\left[\norm{\nabla f_i(x_t)}^2\right] + \alpha^2 (1 - p) \frac{24n(L + L_{\max})}{p} \cdot \Delta^2 + \mathcal{O}((1-p)\alpha^3)
\end{aligned}
$$

Thus, by applying the variance transfer lemma \ref{lemma: variance_transfer} and Assumption \ref{assumption: f PL-condition}, we have
$$
\begin{aligned}
\E[f(x_{t+1})] - \inf f \leq \bigg[ 1 - &\left( 1 + (1-p)\sqrt{ \frac{C_1}{2 c}} \right) \alpha 2 \mu + 2 \alpha^2 L L_{\max} \left( 1 - (1 - p)\sqrt{\frac{2 C_2}{c}} \right) + \\ 
& \alpha^2 (1 - p) \frac{24n(L + L_{\max})}{p} \cdot 2\mu \bigg] \cdot (\E[f(x_t) ]- \inf f)  + \alpha^2 L \left( 1 - (1-p)\sqrt{\frac{2 C_2}{c}} \right) \sigma_f^* \\
& + \alpha^2 (1 - p) \frac{24n(L + L_{\max})}{p} \cdot \Delta^2 + \mathcal{O}((1-p)\alpha^3).
\end{aligned}
$$

Then by using the stepsize assumption in Theorem \ref{thm: SGQ}, we are able to simplify the coefficient in front of $(\E[f(x_t) ]- \inf f)$:
$$ 
\begin{aligned}
&1 - \left( 1 + (1-p)\sqrt{ \frac{C_1}{2 c}} \right) \alpha 2 \mu + 2 \alpha^2 L L_{\max} \left( 1 - (1 - p)\sqrt{\frac{2 C_2}{c}} \right ) +
\alpha^2 (1 - p) \frac{24n(L + L_{\max})}{p} \cdot 2\mu \\
& = 1 -  \Bigg [\left( 2 + 2(1-p)\sqrt{ \frac{C_1}{2 c}} \right) + 2 \frac{\alpha L L_{\max}}{\mu} \left( 1 - (1 - p)\sqrt{\frac{2 C_2}{c}} \right )+ \alpha (1 - p) \frac{48n(L + L_{\max})}{p} \Bigg]\alpha \mu.
\end{aligned}
$$

From the stepsize assumption, we know that 
$$ \alpha \le \frac{\mu}{4L L_{\max}}, \quad \alpha \le \frac{p}{96n(L +L_{\max})}\cdot \frac{1}{1-p}.$$

Then the the coefficient in front of $(\E[f(x_t) ]- \inf f)$ has the following reduction:
$$ 
\begin{aligned}
& 1 -  \Bigg [\left( 2 + 2(1-p)\sqrt{ \frac{C_1}{2 c}} \right) + 2 \frac{\alpha L L_{\max}}{\mu} \left( 1 - (1 - p)\sqrt{\frac{2 C_2}{c}} \right )+ \alpha (1 - p) \frac{48n(L + L_{\max})}{p} \Bigg]\alpha \mu \\
& \le 1 -  \Bigg [\left( 2 + 2(1-p)\sqrt{ \frac{C_1}{2 c}} \right) - \frac{1}{2}\left( 1 - (1 - p)\sqrt{\frac{2 C_2}{c}} \right )- \frac{1}{2} \Bigg]\alpha \mu \\
& = 1 -  \Bigg [ 1 + 2(1-p)\sqrt{ \frac{C_1}{2 c}} + \frac{1}{2}(1 - p)\sqrt{\frac{2 C_2}{c}} \Bigg]\alpha \mu \\
& = 1 -  \left[ 1 + \frac{(1-p)(2\sqrt{C_1} + \sqrt{C_2})}{\sqrt{2c}} \right]\alpha \mu.
\end{aligned}
$$

Now we have,
$$
\begin{aligned}
\E[f(x_{t+1})] - \inf f \leq& \left(1 -   \left( 1 + \frac{(1-p)(2\sqrt{C_1} + \sqrt{C_2})}{\sqrt{2c}} \right)\alpha \mu \right )(\E[f(x_t) ]- \inf f)  + \alpha^2 L \left( 1 - (1-p)\sqrt{\frac{2 C_2}{c}} \right) \sigma_f^* \\
& + \alpha^2 (1 - p) \frac{24n(L + L_{\max})}{p} \cdot \Delta^2 + \mathcal{O}((1-p)\alpha^3).
\end{aligned}
$$

Recursively using the above inequality and telescoping them together, we obtain that
$$
\begin{aligned}
\E[f(x_t)] - \inf f \leq& \left(1 -   \left( 1 + \frac{(1-p)(2\sqrt{C_1} + \sqrt{C_2})}{\sqrt{2c}} \right)\alpha \mu \right )^t(f(x_0) - \inf f)  + \frac{\alpha L}{\mu} \frac{\sqrt{2c} - 2(1-p)\sqrt{C_2}}{\sqrt{2c} +(1-p)(2\sqrt{C_1} + \sqrt{C_2})} \sigma_f^* \\
& + \frac{\alpha \cdot 24n(L + L_{\max})}{p\mu } \frac{(1 - p) \sqrt{2c} }{\sqrt{2c} +(1-p)(2\sqrt{C_1} + \sqrt{C_2})}\cdot \Delta^2 + \mathcal{O}((1-p)\alpha^2),
\end{aligned}
$$
where the constant hidden in $\mathcal{O}(\cdot)$ is a rational polynomial that depends on $\sqrt{C_1}, \sqrt{C_2}, \mu, \sqrt{c}, \Delta, L, L_{\max}$.

\end{proof}

\noindent \textbf{Proof of Proposition \ref{prop: C_1, C_2}}
\begin{proof}
We first expand the variance of $\text{EI}_i$, and express the formula with $ \bar{g}(x) $, $ V_g(x) $, $ S_g(x) $, and $ \kappa_g(x)$. We denote the empirical standard deviation by $\sigma_g(x)$. For simplicity of notation, we will omit the dependence on $x$ when it is clear from the context.
$$ 
\begin{aligned}
  \mathrm{Var}\Bigg[ \Bigg\{ &\alpha f'(x)f_i(x) - \frac{\alpha^2L}{2} f_i'(x)^2  \Bigg \}_{i=1}^n \Bigg] \\
  &= \alpha^2 \left(  \frac{(1+\alpha L) (f')^2}{n} \sum_{i=1}^n(f_i')^2 - (f')^4 - \frac{ \alpha L f'}{n}\sum_{i=1}^n f_i^3 + \frac{\alpha^2 L^2}{4n} \Big(\sum_{i=1}^n f_i'^4 - \frac{1}{n}\big(\sum_{i=1}^n f_i'^2 \big)^2 \Big)  \right) \\
  & = \alpha^2 (1 + \alpha L) \bar{g}^2 (\bar{g}^2 + \sigma_g^2)- \alpha^2 \bar{g}^4 - \alpha^3 L \bar{g} (S_g\sigma_g^3 + 3\bar{g} \sigma_g^2 + \bar{g}^3)   + \frac{\alpha^4 L^2}{4} \left( (\kappa_g-1) \sigma_g^4 + 4 \bar{g} S_g \sigma_g^3 + 4 \bar{g}^2 \sigma_g^2 \right) \\
&= \alpha^2 \left[ (1 - \alpha L)^2 \bar{g}^2 \sigma_g^2 - \alpha L (1 - \alpha L ) \bar{g} S_g \sigma_g^3 + \frac{\alpha^2 L^2}{4} (\kappa_g-1) \sigma_g^4 \right].
\end{aligned}
$$

Then when $\alpha L \le 1$, we have

$$ 
\begin{aligned}\alpha L (1 - \alpha L ) |\bar{g}| |S_g| \sigma_g^3 &\le  \alpha L (1 - \alpha L ) |\bar{g}| \cdot (1 - \delta) \sqrt{\kappa_g - 1} \sigma_g^3 \\
& = 2 \left ( \sqrt{1 - \delta} \cdot (1 - \alpha L) |\bar{g}| \sigma_g \right ) \cdot \left ( \sqrt{1 - \delta} \cdot \frac{\alpha L}{2} \sqrt{\kappa_g - 1} \cdot  \sigma_g^2 \right ) \\
\text{ [Basic inequality: $2ab \le a^2 + b^2$] }  \quad &\le (1 - \delta) (1 - \alpha L)^2 \bar{g}^2 \sigma_g^2 +  (1 - \delta)\frac{\alpha^2 L^2}{4} (\kappa_g-1) \sigma_g^4 \\
 & = (1- \delta) \left[ (1 - \alpha L)^2 \bar{g}^2 \sigma_g^2 + \frac{\alpha^2 L^2}{4} (\kappa_g-1) \sigma_g^4 \right].
\end{aligned}
$$

Therefore, the variance of $\text{EI}_i$ can be written as
$$ 
\begin{aligned}
  \mathrm{Var}\Bigg[ \Bigg\{ \alpha f'(x)f_i(x) &- \frac{\alpha^2L}{2} f_i'(x)^2  \Bigg \}_{i=1}^n  \Bigg] = \alpha^2 \left[ (1 - \alpha L)^2 \bar{g}^2 \sigma_g^2 - \alpha L (1 - \alpha L ) \bar{g} S_g \sigma_g^3 + \frac{\alpha^2 L^2}{4} (\kappa_g-1) \sigma_g^4 \right],\\
  &\ge \alpha^2 \left[ (1 - \alpha L)^2 \bar{g}^2 \sigma_g^2 - (1- \delta) \left[ (1 - \alpha L)^2 \bar{g}^2 \sigma_g^2 + \frac{\alpha^2 L^2}{4} (\kappa_g-1) \sigma_g^4 \right] + \frac{\alpha^2 L^2}{4} (\kappa_g-1) \sigma_g^4 \right] \\
  &= \delta \alpha^2 (1-\alpha L)^2 \bar{g}^2 \sigma_g^2 + \delta \frac{\alpha^4 L^2}{4} (\kappa_g-1) \sigma_g^4.
\end{aligned}
$$

If we pick $\alpha$ such that $\alpha L \le \frac{1}{2}$,  then
$$ 
\begin{aligned}
  \mathrm{Var}\Bigg[ \Bigg\{ \alpha f'(x)f_i(x) - \frac{\alpha^2L}{2} f_i'(x)^2  \Bigg \}_{i=1}^n \Bigg]  &= \delta \alpha^2 (1-\alpha L)^2 \bar{g}^2 \sigma_g^2 + \delta \frac{\alpha^4 L^2}{4} (\kappa_g-1) \sigma_g^4\\
  \text{ [$\alpha L \le \frac{1}{2}$]} \quad &\ge  \alpha^2 \cdot \frac{\delta V_g}{4 \bar{g}^2} \bar{g}^4 + \delta \frac{\alpha^4 L^2}{4} (\kappa_g-1) V_g^2 \\
  &= \frac{\delta V_g}{4 \bar{g}^2} \cdot \alpha^2 \bar{g}^4 + \frac{\delta (\kappa_g-1)V_g^2}{4(V_g + \bar{g}^2)^2} \cdot \alpha^4 L^2 \left( \sum_{i=1}^n \norm{\nabla f_i(x)}^2 \right)^2.
\end{aligned}
$$

Then we may set
$$
C_1 = \inf_{x\in \R}\frac{\delta(x) V_g(x)}{4\bar{g}^2(x)}, \inf{x \in \R} \quad C_2 = \inf_{x\in \R}\frac{\delta(x) (\kappa_g(x) - 1)V_g(x)^2}{4\left(V_g(x) + \bar{g}^2(x) \right)^2}.
$$ 

This finishes the proof.

\end{proof}

\noindent \textbf{Proof of Proposition \ref{prop: tilde_c}}
\begin{proof} 
The first argument of $\tilde{c}(x) \le n-1$ is already proved in Lemma \ref{lemma: EI difference}. We first start with proving that if $\D_x$ is sub-gaussian, then with probability $1 - 1/n$, we have $\tilde{c}(x) \le \mathcal{O}(\sigma_{\D_x}\sqrt{\log n})$.

This can be proved by using the union bound and sub-gaussian property. Assume that the distribution $\D_x$ has mean $\mu_{\D_x}$ and variance proxy $\sigma_{\D_x}^2$, then

$$
\begin{aligned}
\Pb \left( \min_i EI_i < \mu_{\D_x} - t \right) &\le \sum_{i=1}^n \Pb(EI_i \le \mu_{\D_x} - t) \\
& \le n \exp{\left(-\frac{t^2}{2\sigma_{\D_x}^2} \right)},
\end{aligned}$$
where the first inequality uses the union bound and the second inequality uses the fact that $\D_x$ is sub-gaussian. Let $t =  2\sigma_{\D_x} \sqrt{\log 3n}$, we obtain

\begin{equation}
\label{eq: min EI_i estimate}
 \Pb \left( \min_i EI_i < \mu_{\D_x} - 2 \sigma_{\D_x} \sqrt{\log 3n}  \right) \le n\frac{1}{(3n)^2} < \frac{1}{3n}.
\end{equation}

Thus $\min_i EI_i \ge \mu_{\D_x} - 2 \sigma_{\D_x} \sqrt{\log 3n}$ holds with probability at least $1 - \frac{1}{3n}$. 

Then by sub-gaussian property, we can control how much the sample mean deviates from its expectation:
$$\Pb \left(\frac{1}{n}\sum_{i=1}^n EI_i \geq \mu_{\D_x} + t \right) \leq \exp\left( -\frac{n t^2}{2\sigma^2_{\D_x}} \right).$$

Plugging $t = \sigma_{\D_x}\sqrt{\frac{2}{n} \log 3n}$, we obtain 
\begin{equation}
\label{eq: EI_bar estimate}
    \Pb \left(\frac{1}{n}\sum_{i=1}^n EI_i \geq \mu_{\D_x} + \sigma_{\D_x} \sqrt{\frac{2}{n} \log 3n} \right) \leq \frac{1}{3n}.
\end{equation}

The last thing is to argue that there exist a constant $\epsilon_{\D_x}$ such that $\max_i EI_i - \mu_{\D_x} \ge \epsilon_{\D_x}$ holds with non-zero probability. This is true since we assume that $\D_x$ is a non-degenerate distribution. In specific, there always exists a positive threshold $\epsilon_{\D_x}$, such that 
$$ \Pb \left(EI_i - \mu_{\D_x} \ge \epsilon_{\D_x} \right) \ge \frac{1}{3},$$
for each $i$. Then, we will have
\begin{equation}
\label{eq: max EI_i estimate}
\Pb \left( \max_i EI_i - \mu_{\D_x} \ge \epsilon_{\D_x} \right) \ge 1 - (1 -\frac{1}{3})^n \ge 1 - \frac{1}{3n},
\end{equation}
when $n$ sufficiently large. Now combine \eqref{eq: min EI_i estimate},\eqref{eq: EI_bar estimate}, and apply the union bound, we have
$$
\begin{aligned}
\frac{\frac{1}{n}\sum_{i=1}^n EI_i - \min_i EI_i}{\max_i EI_i - \frac{1}{n}\sum_{i=1}^n EI_i} &\le \frac{\mu_{\D_x} + \sigma_{\D_x}\sqrt{\frac{2}{n} \log 3n} - \left( \mu_{\D_x} - 2 \sigma_{\D_x} \sqrt{\log 3n} \right)}{\max_i EI_i - \left(\mu_{\D_x} + \sigma_{\D_x}\sqrt{\frac{2}{n} \log 3n}\right )} \\
& = \frac{ \sigma_{\D_x}\sqrt{\frac{2}{n} \log 3n} + 2 \sigma_{\D_x} \sqrt{\log 3n} }{\max_i EI_i - \left(\mu_{\D_x} + \sigma_{\D_x}\sqrt{\frac{2}{n} \log 3n} \right )},
\end{aligned}
$$
holds with probability $1 - \frac{2}{3n}$. Since when, $n$ sufficiently large, we will have 
$$\sigma_{\D_x}\sqrt{\frac{2}{n} \log 3n} \le \frac{\epsilon_{\D_x}}{2}.$$

Thus by \eqref{eq: max EI_i estimate}
$$\max_i EI_i - \left(\mu_{\D_x} + \sigma_{\D_x}\sqrt{\frac{2}{n} \log 3n} \right ) \ge \mu_{\D_x} + \epsilon_{\D_x} - \left(\mu_{\D_x} + \sigma_{\D_x}\sqrt{\frac{2}{n} \log 3n} \right ) \ge \frac{\epsilon_{\D_x}}{2},$$
holds with probability $1 - \frac{1}{3n}$, when $n$ sufficiently large. Finally, by utilizing the union bound again, we obtain that
$$
\begin{aligned}
\frac{\frac{1}{n}\sum_{i=1}^n EI_i - \min_i EI_i}{\max_i EI_i - \frac{1}{n}\sum_{i=1}^n EI_i} & \le \frac{\sigma_{\D_x}\sqrt{\frac{8}{n} \log 3n} + 4 \sigma_{\D_x} \sqrt{\log 3n} }{\epsilon_{\D_x}},
\end{aligned}
$$
holds with probability $1 - \frac{1}{n}$, when $n$ sufficiently large.
Define 
$$\tilde{c}(x) := \frac{\sigma_{\D_x}\sqrt{\frac{8}{n} \log 3n} + 4 \sigma_{\D_x} \sqrt{\log 3n} }{\epsilon_{\D_x}} = \mathcal{O}(\sigma_{\D_x}\sqrt{\log n}).$$

This finishes proving the existence of $\tilde{c}(x)$.

Second, we prove that if $\D_x$ is bounded, and denote the bound of $\D_x$ by $B_{\D_x}$ (i.e. $|EI_i| \le B_{\D_x}$), then there exist $\tilde{c}(x) = \mathcal{O}(B_{\D_x})$ with probability $1 - \frac{1}{n}$. Due to the boundness of $EI_i$, we have
$$
\frac{\frac{1}{n}\sum_{i=1}^n EI_i - \min_i EI_i}{\max_i EI_i - \frac{1}{n}\sum_{i=1}^n EI_i} \le \frac{\frac{1}{n}\sum_{i=1}^n EI_i + B_{\D_x}}{\max_i EI_i - \frac{1}{n}\sum_{i=1}^n EI_i}.
$$

Then by Hoeffding's Inequality, we can bound how much the sample mean deviates from its expectation similarly to \eqref{eq: EI_bar estimate}. In specific, since $EI_i$ are bounded, we have
$$
\Pb \left(\frac{1}{n}\sum_{i=1}^n EI_i \geq \mu_{\D_x} + t \right) \leq \exp\left( -\frac{2n t^2}{4 B_{\D_x}^2} \right).
$$

If we let $t = B_{\D_x}\sqrt{\frac{2}{n}\log 2n}$, then
$$
\Pb \left(\frac{1}{n}\sum_{i=1}^n EI_i \geq \mu_{\D_x} + B_{\D_x}\sqrt{\frac{2}{n}\log 2n} \right) \leq \frac{1}{2n}.
$$

Then with the similar analysis in equation \eqref{eq: max EI_i estimate}, we know that there exists a constant $\epsilon_{\D_x}$ such that 
$$
\Pb \left( \max_i EI_i - \mu_{\D_x} \ge \epsilon_{\D_x} \right) \ge 1 - \frac{1}{2n}.$$

Also, since when $n$ sufficiently large, we will have 
$$B_{\D_x}\sqrt{\frac{2}{n} \log 2n} \le \frac{\epsilon_{\D_x}}{2}.$$

Thus we obtain that
$$\max_i EI_i - \left(\mu_{\D_x} + B_{\D_x}\sqrt{\frac{2}{n} \log 2n} \right ) \ge \mu_{\D_x} + \epsilon_{\D_x} - \left(\mu_{\D_x} + B_{\D_x}\sqrt{\frac{2}{n} \log 2n} \right ) \ge \frac{\epsilon_{\D_x}}{2},$$
holds with probability $1 - \frac{1}{2n}$, for sufficiently large $n$. Then by integrating the above probability bound together, we obtain that
$$
\frac{\frac{1}{n}\sum_{i=1}^n EI_i - \min_i EI_i}{\max_i EI_i - \frac{1}{n}\sum_{i=1}^n EI_i} \le \frac{\mu_{\D_x} + B_{\D_x}\sqrt{\frac{2}{n}\log 2n} + B_{\D_x}}{\epsilon_D/2} \le \frac{2B_{\D_x} + B_{\D_x}\sqrt{\frac{2}{n}\log 2n}}{\epsilon_D/2} 
$$
holds with probability $1 - \frac{1}{n}$, for large $n$. Define $ \tilde{c}(x) := \frac{2B_{\D_x} + B_{\D_x}\sqrt{\frac{2}{n}\log 2n}}{\epsilon_D/2}  = \mathcal{O}(B_{\D_x})$, this finishes the proof.

\end{proof}

\newpage
\subsection{Supporting Results}
\noindent \textbf{Proof of the Lower Bound for $\sqrt{c/2} - \sqrt{C_2}$:} 
\begin{proof}We claim that $c \ge 4C_2$. The reason is as follows:

By the definition of $C_2$ in Assumption \ref{assumption: Hete}, the inequality  
$$ \mathrm{Var}\Big[ \Big \{\alpha \langle \nabla f(x), \nabla f_i(x)\rangle - \frac{\alpha^2L}{2} \norm{\nabla f_i(x)}^2 \Big\}_{i=1}^n \Big] \ge C_1 \alpha^2 \norm{\nabla f(x)}^4 + C_2 \alpha^4 L^2 \cdot \E_i \left [\norm{\nabla f_i(x)}^2 \right ]^2,
$$
needs to be satisfied for all $x \in\R^d$. Specifically, we let $x = x^*$ be the point that minimizes $f(x)$. Then by noticing $\nabla f(x^*) = 0$, we have 
$$
\mathrm{Var}\left[ \Big \{\alpha \langle \nabla f(x^*), \nabla f_i(x^*)\rangle -\frac{\alpha^2L}{2} \norm{\nabla f_i(x^*)}^2 \Big \}_{i=1}^n \right] = \frac{\alpha^4L^2}{4} \E_i \left[ \norm{\nabla f_i(x^*)}^4 \right] \ge C_2 \alpha^4 L^2 \cdot \E_i \left [\norm{\nabla f_i(x^*)}^2 \right ]^2,
$$
which gives
$$C_2 \le \frac{1}{4}\frac{\E_i \left[ \norm{\nabla f_i(x^*)}^4 \right]}{\E_i \left [\norm{\nabla f_i(x^*)}^2 \right ]^2}.$$

And from the definition of $c$, we also have 
$$\max_i \text{EI}_i(x) - \E_i[\text{EI}_i(x)] \ge \sqrt{\frac{\mathrm{Var}[\{\text{EI}_i(x)\}_{i=1}^n]}{c}}, \quad \forall x \in \mathbb{R}^d.$$

We again let $x = x^*$ and obtain
$$ -\min_i \norm{\nabla f_i(x^*)}^2 +  \E_i[\norm{\nabla f_i(x^*)}^2] \ge \sqrt{\frac{\E_i \left[ \norm{\nabla f_i(x^*)}^4 \right]}{c}}, \quad \forall x \in \mathbb{R}^d.$$

Thus by squaring both sides, we have
$$c \ge \frac{\E_i \left[ \norm{\nabla f_i(x^*)}^4 \right]}{\left ( \E_i \left [\norm{\nabla f_i(x^*)}^2 \right ] - \min_i \norm{\nabla f_i(x^*)}^2\right)^2 } \ge \frac{\E_i \left[ \norm{\nabla f_i(x^*)}^4 \right]}{ \E_i \left [\norm{\nabla f_i(x^*)}^2 \right ]^2} \ge 4C_2.$$

This proves our claim.

\end{proof}

\begin{lemma}[Variance transfer]
\label{lemma: variance_transfer}
If Assumptions \ref{assumption: f_i convex} and \ref{assumption: f_i smooth} hold, then for all $x \in \mathbb{R}^d$, we have
$$
\E_i \left[ \norm{\nabla f_i(x)}^2 \right] \leq 4 L_{\max} (f(x) - \inf f) + 2\sigma_f^*.
$$
\end{lemma}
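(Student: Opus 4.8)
The plan is to use the standard auxiliary-function device. For each $i \in [[1,n]]$, define
$h_i(x) := f_i(x) - f_i(x^*) - \langle \nabla f_i(x^*), x - x^* \rangle$.
By Assumption \ref{assumption: f_i convex} each $h_i$ is convex, and since $\nabla h_i(x) = \nabla f_i(x) - \nabla f_i(x^*)$, Assumption \ref{assumption: f_i smooth} gives that $h_i$ is again $L_i$-smooth (subtracting an affine term does not change the smoothness constant). Convexity also yields $h_i(x) \ge 0$ for all $x$ with $h_i(x^*) = 0$, so $x^*$ is a global minimizer of $h_i$ and $\nabla h_i(x^*) = 0$.

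First I would record the elementary consequence of $L_i$-smoothness of a convex function minimized at $x^*$: plugging $y = x - \frac{1}{L_i}\nabla h_i(x)$ into the descent inequality $h_i(y) \le h_i(x) + \langle \nabla h_i(x), y - x\rangle + \frac{L_i}{2}\norm{y-x}^2$ and using $h_i(x^*) \le h_i(y)$ gives $\norm{\nabla h_i(x)}^2 \le 2 L_i\, h_i(x) \le 2 L_{\max}\, h_i(x)$, i.e. $\norm{\nabla f_i(x) - \nabla f_i(x^*)}^2 \le 2 L_{\max}\, h_i(x)$. Next, applying the crude bound $\norm{a}^2 \le 2\norm{a-b}^2 + 2\norm{b}^2$ with $a = \nabla f_i(x)$, $b = \nabla f_i(x^*)$ yields $\norm{\nabla f_i(x)}^2 \le 4 L_{\max}\, h_i(x) + 2\norm{\nabla f_i(x^*)}^2$.

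Averaging this over $i$ finishes the argument: the affine correction telescopes, so $\E_i[h_i(x)] = f(x) - f(x^*) - \langle \nabla f(x^*), x - x^*\rangle = f(x) - \inf f$, using that $\nabla f(x^*) = 0$ since $x^*$ is the unconstrained minimizer of the differentiable $f$; and $\E_i[\norm{\nabla f_i(x^*)}^2] = \mathrm{Var}[\{\nabla f_i(x^*)\}_{i=1}^n] = \sigma_f^*$, because the empirical mean of the vectors $\nabla f_i(x^*)$ equals $\nabla f(x^*) = 0$. Combining gives exactly $\E_i[\norm{\nabla f_i(x)}^2] \le 4 L_{\max}(f(x) - \inf f) + 2\sigma_f^*$.

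There is no genuine obstacle here; the only two points that warrant a line of care are (i) checking that $h_i$ inherits both convexity and the smoothness constant $L_i$ after removing the affine term, and (ii) observing that the vector-valued empirical variance $\sigma_f^*$ collapses to $\E_i[\norm{\nabla f_i(x^*)}^2]$ precisely because the mean vanishes at $x^*$. Everything else is the textbook smooth-convex gradient inequality plus an averaging step.
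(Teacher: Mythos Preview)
Your proof is correct and is essentially the same as the paper's: both split $\norm{\nabla f_i(x)}^2$ via $\norm{a}^2 \le 2\norm{a-b}^2 + 2\norm{b}^2$ with $b = \nabla f_i(x^*)$, bound $\norm{\nabla f_i(x)-\nabla f_i(x^*)}^2$ by $2L_{\max}\big(f_i(x)-f_i(x^*)-\langle \nabla f_i(x^*),x-x^*\rangle\big)$, and then average using $\nabla f(x^*)=0$. The only cosmetic difference is that you name the auxiliary function $h_i$ and derive the gradient bound from the descent lemma, whereas the paper quotes the equivalent co-coercivity inequality directly.
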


\begin{proof}
Let $x^* \in \arg\min f$, we start by writing
$$
\norm{\nabla f_i(x)}^2 \leq 2\norm{\nabla f_i(x) - \nabla f_i(x^*)}^2 + 2\norm{\nabla f_i(x^*)}^2.
$$

Taking the expectation with respect to $i$, we obtain that
$$
\E_i \left[ \norm{\nabla f_i(x)}^2 \right] \leq 2\E_i \left[ \norm{\nabla f_i(x) - \nabla f_i(x^*)}^2 \right] + 2\sigma_f^*.
$$

Now it remains to show that 
\begin{equation}
\label{eq: variance transfer}
2\E_i \left[ \norm{\nabla f_i(x) - \nabla f_i(x^*)}^2 \right] \le 4 L_{\max} (f(x) - \inf f).
\end{equation}

This is achieved by utilizing the coercivity of convex and smooth functions. In fact, we have 
$$\frac{1}{2L_{\max}} \norm{\nabla f_i(y) - \nabla f_i(x)}^2 \le \frac{1}{2L_i} \norm{\nabla f_i(y) - \nabla f_i(x)}^2 \le f_i(y) - f_i(x) - \langle \nabla f_i(x), y-x \rangle,
$$
for all $x,y \in \R^d$ and $i \in [[1,n]]$. Thus, by plugging $y=x, x=x^*$, we obtain that 
$$\frac{1}{2L_{\max}} \norm{\nabla f_i(x) - \nabla f_i(x^*)}^2 \le f_i(x) - f_i(x^*) - \langle \nabla f_i(x^*), x-x^* \rangle.
$$

If we take the expectation of the above equation and notice that $\E_i[\nabla f_i(x^*)] = \nabla f(x^*) = 0$, we will get
$$\frac{1}{2L_{\max}}\E_i \left[ \norm{\nabla f_i(x) - \nabla f_i(x^*)}^2 \right]\le f(x) - \inf f,
$$
which aligns with \eqref{eq: variance transfer}. This proves the variance transfer lemma.

\end{proof}

\end{document}